\documentclass{article}

\usepackage[preprint]{neurips_2026}

\usepackage[utf8]{inputenc} 
\usepackage[T1]{fontenc}    
\usepackage{hyperref}       
\usepackage{url}            
\usepackage{booktabs}       
\usepackage{amsfonts}       
\usepackage{nicefrac}       
\usepackage{microtype}      
\usepackage{xcolor}         
\usepackage{comment}
\usepackage{graphicx}
\usepackage{caption}
\usepackage{amsmath}
\usepackage{subcaption}
\usepackage{multirow}

\usepackage{amsthm}

\newtheorem{theorem}{Theorem}

\newtheorem{proposition}[theorem]{Proposition}

\theoremstyle{definition}

\theoremstyle{remark}

\title{StableGrad: Backward Scale Control\\without Batch Normalization}

\author{%
  Jose I. Mestre \\
  Universitat Politècnica de València\\
  \texttt{jimesmir@upv.es} \\
  \And
  Alberto Fernández-Hernández \\
  Universitat Politècnica de València\\
  \texttt{a.fernandez@upv.es} \\
  \And
  Cristian Pérez-Corral \\
  Universitat Politècnica de València\\
  \texttt{cpercor@upv.es} \\
  \And
  Manuel F. Dolz \\
  Universitat Jaume I\\
  \texttt{dolzm@uji.es} \\
  \And
Enrique S. Quintana-Ortí \\
  Universitat Politècnica de València\\
  \texttt{quintana@disca.upv.es} \\
}

\begin{document}

\maketitle

\begin{abstract}
Training very deep neural networks requires controlling the propagation of magnitudes across depth. Without such control, activations and gradients may vanish, explode, or enter unstable regimes that make optimization fail. Modern architectures often mitigate this problem through Batch Normalization, residual connections, or other normalization layers, which repeatedly re-scale or bypass intermediate representations. However, these mechanisms are not always appropriate. In Physics-Informed Neural Networks (PINNs), the network represents a continuous physical field and its input derivatives define the training objective, making batch-dependent normalization problematic because it can introduce non-local dependencies into the predicted field and its derivatives. We propose StableGrad, an optimizer-level scale-control mechanism that corrects layer-wise weight-gradient imbalances without modifying the forward model. Because the normalization is applied only after backpropagation and before the optimizer update, the network output, its derivatives, and the physical residual remain unchanged. We analyze the effective training dynamics induced by this rescaling and evaluate StableGrad on deep PINNs as the target application, with BatchNorm-free convolutional networks serving as a diagnostic stress test. On PINN benchmarks, StableGrad improves matched-depth solution accuracy and makes deeper models more reliable under standard optimization. On ResNet and EfficientNet architectures, where removing Batch Normalization normally leads to training collapse, StableGrad stabilizes optimization without introducing any other architectural change. These results show that optimizer-level control of weight-gradient scale can provide a practical alternative when forward normalization is unavailable or undesirable.
\end{abstract}

\section{Introduction}
Depth is a central source of expressive power in Deep Neural Networks (DNNs), but it also makes training increasingly sensitive to the propagation of magnitudes across layers. Activations must remain well-scaled in the forward pass, while gradients must remain useful in the backward pass. Although these two requirements are related, they are not equivalent: preserving scale in one direction does not necessarily preserve it in the other.

Classical initialization schemes address this problem by controlling weight variance through choices such as \texttt{fan-in}, \texttt{fan-out}, or combinations between them \citep{glorot2010understanding,he2015delving}. 
These choices implicitly define a trade-off between forward and backward stability: \texttt{fan-in} scaling primarily preserves activation magnitudes in the forward pass, whereas \texttt{fan-out} scaling primarily preserves gradient magnitudes in the backward pass. 
Moreover, initialization only sets the scale of the network at the beginning of training; it does not guarantee that activation or gradient magnitudes will remain well controlled as the weights evolve under optimization. 
Normalization layers, most notably Batch Normalization, reduce this burden by repeatedly re-scaling intermediate representations \citep{ioffe2015batch}. 
However, such mechanisms are not always available. 
In Physics-Informed Neural Networks (PINNs), the network represents a continuous physical field whose derivatives are used to define the training objective \citep{raissi2019physics}. 
Batch-dependent normalization can therefore interfere with the local interpretation of the predicted field and its derivatives.

In this work, we propose StableGrad, an optimizer-level mechanism for controlling backward scale without introducing architectural normalization. 
Rather than relying on initialization to simultaneously balance forward and backward propagation, StableGrad acts after the backward pass and before the optimizer step, directly rescaling the weight gradients of each layer. 
Specifically, each layer-wise weight gradient is normalized using its own empirical standard deviation and rescaled using the standard deviation of the prediction gradient as a reference scale. 
This transfers the scale of the output adjoint to the parameter updates, mitigating depth-induced gradient-scale imbalances across layers.

StableGrad is complementary to initialization rather than a replacement for it. 
In our experiments, we use an activation-aware \texttt{fan-in} initialization to obtain a well-scaled forward pass at the start of training, while StableGrad dynamically controls the backward gradient scale throughout optimization. 
Because the normalization is applied only to gradients before the optimizer update, it does not modify the forward computation, introduce batch-dependent predictions, or alter the physical residual.

Our contributions are:
\begin{itemize}
    \item We introduce StableGrad, an optimizer-level gradient rescaling mechanism that stabilizes layer-wise weight-gradient magnitudes using the prediction-gradient scale as a reference.
    \item We analyze the local training dynamics induced by StableGrad through its effective kernel, showing how layer-wise gradient rescaling changes the functional update.
    \item We evaluate StableGrad in two settings where architectural normalization is undesirable or deliberately removed: BatchNorm-free CNNs, where it stabilizes training without BatchNorm, and deep PINNs, where it enables deeper networks and improves solution accuracy.
\end{itemize}


\section{Background and Motivation}

\subsection{Forward and Backward Signal Propagation}
A deep neural network applies a sequence of transformations of the form
\begin{equation}
    h_{\ell+1} = \phi(z_\ell), 
    \qquad 
    z_\ell = W_\ell h_\ell ,
\end{equation}
where $h_\ell$ denotes the representation at layer $\ell$, $W_\ell$ the corresponding weight matrix, and $\phi$ a nonlinear activation. As depth increases, the scale of $h_\ell$ becomes increasingly sensitive to the statistics of the weights and activations. If this scale is not controlled, representations may progressively vanish, explode, or enter saturated regimes where optimization becomes difficult.

The backward pass has an analogous propagation problem. Gradients are recursively transformed as
\begin{equation}
    \frac{\partial \mathcal{L}}{\partial h_\ell}
    =
    W_\ell^\top
    \left(
    \frac{\partial \mathcal{L}}{\partial h_{\ell+1}}
    \odot \phi'(z_\ell)
    \right),
\end{equation}
where \(\odot\) denotes the Hadamard, or element-wise, product. Thus, the same weights and nonlinearities that determine forward activation scales also affect backward gradient scales. However, the conditions for stable forward propagation and stable backward propagation are not identical. A choice of weight scale that preserves activation magnitudes does not necessarily preserve gradient magnitudes, especially in deep networks. Depth therefore turns scale propagation into a bidirectional constraint: 
activations must remain well-scaled in the forward pass, while the backpropagated adjoint signals must retain useful magnitudes before they induce parameter gradients.

This issue is closely related to the classical vanishing and exploding gradient problem \citep{bengio1994learning}, as well as later analyses of signal propagation, critical initialization, and dynamical isometry in deep networks \citep{saxe2013exact,schoenholz2017deep,pennington2017resurrecting}.

\subsection{Initialization as a Trade-off}
Weight initialization is the first mechanism used to control scale propagation. Classical initialization schemes, such as Xavier and Kaiming, set weight variance according to layer dimensions and activation statistics, aiming to preserve signal variance across depth \citep{glorot2010understanding,he2015delving}. This principle also persists in more recent initializers: even when they replace random sampling with structured constructions, or introduce refined activation-dependent scaling, the final magnitude is still governed by fan-in, fan-out, or combinations of both \citep{Chang2020Principled,fernandezhernandez2025sinusoidal}. Thus, initialization remains tied to a fan-mode choice. Using fan-in primarily favors forward activation stability, whereas fan-out favors backward gradient stability. Averages or other interpolations between fan-in and fan-out reduce the asymmetry, but they do not remove the trade-off; they merely choose a different compromise between the two propagation directions.

Activation-dependent gains further adjust the weight scale to account for the expected effect of nonlinearities. However, these gains do not remove the underlying tension: initialization must still decide how much scale preservation to allocate to the forward pass and how much to the backward pass. This trade-off becomes more restrictive as depth increases, because small deviations from the desired scale can compound across many layers.

This motivates a different view of initialization. Rather than treating it as a single mechanism that must balance forward and backward stability, we split the responsibility across the training procedure: initialization is used to preserve forward activations, while gradient normalization before the optimizer step is used to control the scale of weight gradients.

\subsection{Why Forward Normalization is Problematic in PINNs}

PINNs are a representative case where scale propagation becomes especially delicate. A PINN represents a continuous field, for example $u_\theta(x,t)$, and is trained not only from data, but also by penalizing the residual of a differential equation \citep{raissi2019physics}. Given collocation points $\{(x_i,t_i)\}_{i=1}^{N_f}$, a typical physics loss is
\begin{equation}
    \mathcal{L}_{\mathrm{PDE}}(\theta)
    =
    \frac{1}{N_f}
    \sum_{i=1}^{N_f}
    \left|
    \mathcal{N}[u_\theta](x_i,t_i)
    \right|^2,
\end{equation}
where $\mathcal{N}$ is a differential operator involving derivatives of the network output with respect to its inputs. This changes the gradients received by the optimizer. For the physics loss,
\begin{equation}
    \nabla_\theta \mathcal{L}_{\mathrm{PDE}}
    =
    \frac{2}{N_f}
    \sum_{i=1}^{N_f}
    \mathcal{N}[u_\theta](x_i,t_i)
    \nabla_\theta \mathcal{N}[u_\theta](x_i,t_i).
\end{equation}
Thus, if $\mathcal{N}$ contains terms such as $u_x$, $u_{xx}$, or $\Delta u$, the gradient involves quantities such as $\nabla_\theta \partial_x u_\theta,     \nabla_\theta \partial_{xx} u_\theta,  \nabla_\theta \Delta u_\theta.$ These are parameter sensitivities of input derivatives, not merely sensitivities of the output itself. Consequently, even in constant-width PINNs where hidden layers satisfy \texttt{fan-in} $=$ \texttt{fan-out}, 
the physics-informed objective can induce gradient-scale imbalances across depth. 
The difficulty is therefore not only architectural; it is also introduced by the differential structure of the loss. Appendix~\ref{app:propagation} shows that, unlike in standard supervised networks where forward and backward variance propagation can be approximated by scalar recursions, PINN residuals involving input derivatives create coupled derivative-adjoint channels. 
This makes it unlikely that a single initialization rule can reliably preserve backward scale across all relevant channels.

Recent work has similarly shown that PINN training is often dominated by optimization pathologies rather than only by approximation capacity. Gradient-flow and NTK analyses identify imbalanced convergence rates across loss components and stiff training dynamics, while loss-landscape analyses connect these difficulties to ill-conditioning induced by differential operators \citep{wang2021understanding,wang2022when}. These observations are consistent with our motivation: differential residuals can create backward-scale imbalances that are not resolved by initialization alone.

Batch-dependent normalization is also problematic in this context. The output at a point should represent a local physical quantity, and its derivatives should be consistent with that local interpretation. BatchNorm can interfere with this structure because the prediction at one point may depend on other points in the same batch. This is particularly problematic when collocation points, boundary points, and initial-condition points are sampled from different distributions but are jointly used to define the physical objective.

This creates a depth-scaling bottleneck: the standard architectural normalization tools that make deep networks trainable are largely unavailable, while the physics-informed objective can still produce unstable gradient magnitudes across depth. We therefore seek a strategy that preserves the physical forward model unchanged and moves gradient-scale control to the optimization procedure.

\subsection{Architectural Mechanisms for Scale Stabilization}

Architectural mechanisms provide a widely used practical answer to the scale propagation problem. Normalization layers such as Batch Normalization, Layer Normalization, Group Normalization, and related methods re-scale intermediate representations during training \citep{ioffe2015batch,ba2016layer,wu2018group}. These layers can be interpreted as scale-resetting mechanisms inserted inside the network. By repeatedly normalizing hidden representations, they reduce the burden placed on initialization and make both forward activations and backward gradients easier to control.

Residual connections provide a complementary mechanism by creating shorter paths for signal and gradient propagation across depth \citep{he2016deep}. Rather than re-scaling representations, they improve trainability by allowing information and gradients to bypass long chains of transformations. In practice, modern deep architectures often combine residual pathways with normalization layers, making optimization substantially less sensitive to the precise initialization scale.

This is especially important in modern convolutional architectures, where Batch Normalization and residual connections are central components of stable deep training. With these architectural stabilizers, scale errors are repeatedly corrected or bypassed. Conversely, when such mechanisms are removed, the network again becomes much more exposed to the accumulation of scale errors across depth.

However, these architectural stabilizers do not solve the setting considered here. Batch-dependent normalization can alter the local physical interpretation of PINN outputs and derivatives, while residual connections do not directly control the scale of the weight gradients produced by differential operators. We therefore seek a mechanism that leaves the forward model unchanged, does not modify the physical residual, and acts only on the gradients passed to the optimizer.


\section{StableGrad}
\label{sec:stablegrad}

The previous discussion suggests a simple principle: forward-scale control and backward-scale control need not be imposed by the same mechanism. Initialization can be used to set a well-scaled forward pass at the beginning of training, while the scale of the gradients can be corrected directly before the optimizer update. StableGrad follows this principle.

Let the trainable parameters be partitioned into layerwise blocks,
\(\theta=(\theta^1,\ldots,\theta^L)\), and let
\(g^\ell=\nabla_{\theta^\ell}\mathcal{L}\) denote the gradient of block \(\ell\).
After the backward pass, StableGrad computes the empirical standard deviation
\(\sigma_\ell=\operatorname{std}(g^\ell)\) of each block gradient. It also computes a reference scale from the adjoint signal at the network output,
\(\sigma_{\mathrm{out}}=\operatorname{std}(\partial\mathcal{L}/\partial u_\theta)\).
The gradient passed to the optimizer is then
\begin{equation}
    \widetilde g^\ell
    =
    \frac{\sigma_{\mathrm{out}}}{\sigma_\ell+\varepsilon}
    g^\ell ,
    \qquad \ell=1,\ldots,L,
    \label{eq:stablegrad}
\end{equation}
where \(\varepsilon>0\) is a small numerical constant.

Thus, StableGrad transfers the scale of the signal that initiates the backward pass to all parameter blocks. 
The method acts after the loss and its derivatives have been computed, and before the optimizer update. 
The rescaled gradients \(\widetilde g^\ell\) are then passed to the optimizer in place of the raw gradients \(g^\ell\).

The choice of \(\sigma_{\mathrm{out}}\) is motivated by the role of the output adjoint as the source of the backward signal. 
StableGrad does not aim to preserve the global norm of the gradient. 
Instead, it enforces a layerwise notion of backward-scale consistency: all blocks are updated from gradients expressed at the same statistical scale as the output adjoint. 
This is precisely the scale that is propagated backwards through the network. 
Appendix~\ref{app:reference_scale} discusses this choice in more detail and describes alternative reference scales that can be used to separate layerwise balancing from changes in global gradient scale.

This makes StableGrad a natural companion to \texttt{fan-in} initialization. 
A \texttt{fan-in} scheme provides an activation-preserving starting point, setting forward magnitudes to propagate stably at initialization, while StableGrad dynamically controls the scale of layer-wise weight gradients throughout optimization. 
This separation avoids using initialization as a compromise between forward and backward propagation: forward activations are initialized in the regime targeted by \texttt{fan-in}, and gradient-scale imbalances are corrected by StableGrad after each backward pass, without introducing normalization layers into the forward model.

The distinction is particularly useful in PINNs. The physical residual is evaluated from the network output and its input derivatives, such as \(u_x\), \(u_{xx}\), or \(u_t\). StableGrad leaves this forward map untouched: the network \(u_\theta\), its derivatives, and the physics residual are computed exactly as in the original model. Only the gradient delivered to the optimizer is rescaled. In this sense, StableGrad provides backward stabilization while preserving the physical interpretation of the forward model.

Figure~\ref{fig:gradient_flow} illustrates the different roles of initialization, BatchNorm, LayerNorm, and StableGrad in a controlled MLP. With vanilla initialization, \texttt{fan-in} preserves forward activation scales but leaves layer-wise weight gradients highly imbalanced, whereas \texttt{fan-out} improves backward signal scaling at the cost of increasing activation scales. BatchNorm and LayerNorm both reduce forward-scale sensitivity by repeatedly normalizing intermediate representations, making the \texttt{fan-in} and \texttt{fan-out} cases more similar. However, the resulting weight-gradient scales remain uneven across layers. StableGrad, instead, keeps the \texttt{fan-in} forward computation unchanged and acts only after backpropagation, equalizing the layer-wise weight-gradient scales passed to the optimizer.

\begin{figure}[htb]
    \centering
    \begin{subfigure}{0.5\linewidth}
        \caption{MLP with fan-in}
        \includegraphics[width=1\linewidth,trim={0cm 0cm 0.84cm 0.6cm},clip]{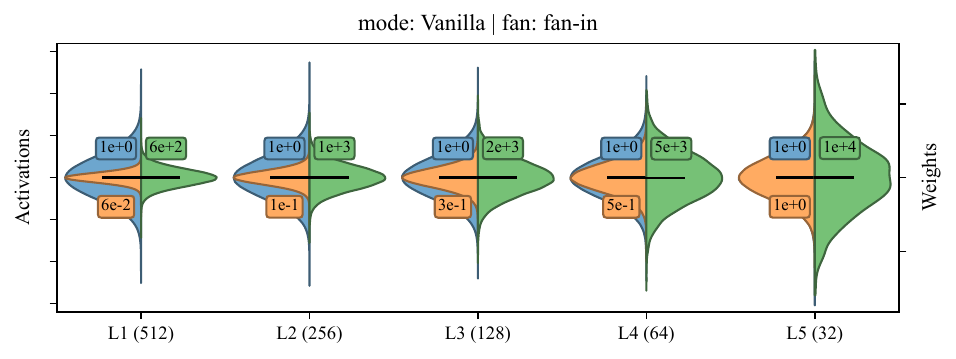}
    \end{subfigure}%
    \begin{subfigure}[b]{0.5\linewidth}
        \caption{MLP with fan-out}
        \includegraphics[width=1\linewidth,trim={0.84cm 0cm 0cm 0.6cm},clip]{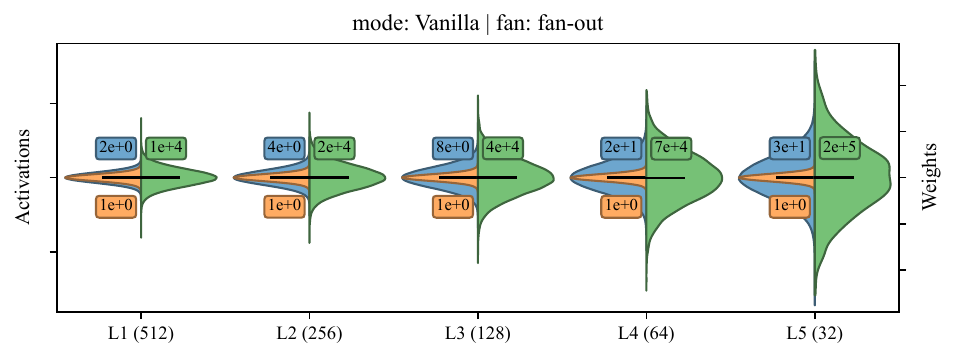}
    \end{subfigure}\\
    \begin{subfigure}{0.5\linewidth}
        \caption{MLP with BatchNorm and fan-in}
        \includegraphics[width=1\linewidth,trim={0cm 0cm 0.84cm 0.6cm},clip]{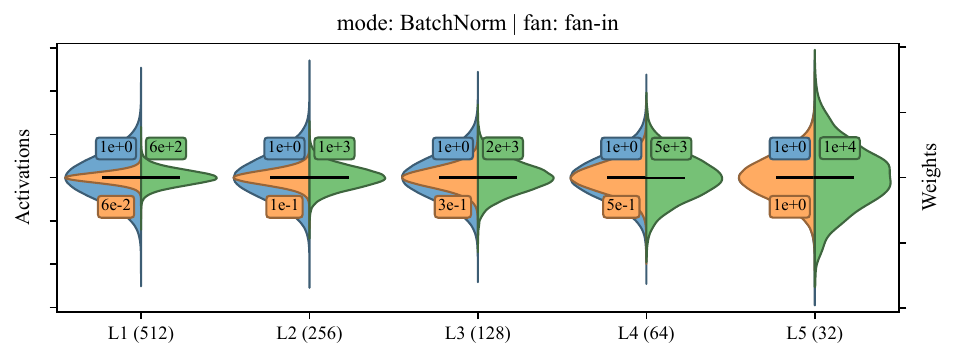}
    \end{subfigure}%
    \begin{subfigure}{0.5\linewidth}
        \caption{MLP with BatchNorm and fan-out}
        \includegraphics[width=1\linewidth,trim={0.84cm 0cm 0cm 0.6cm},clip]{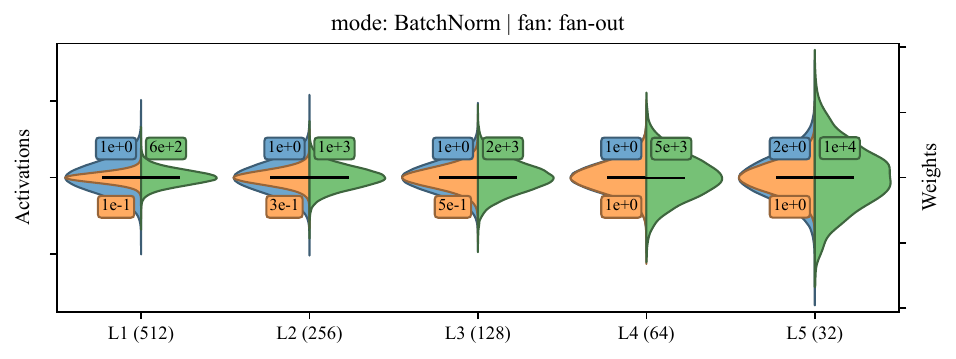}
    \end{subfigure}\\
    \begin{subfigure}{0.5\linewidth}
        \caption{MLP with LayerNorm and fan-in}
        \includegraphics[width=1\linewidth,trim={0cm 0cm 0.84cm 0.6cm},clip]{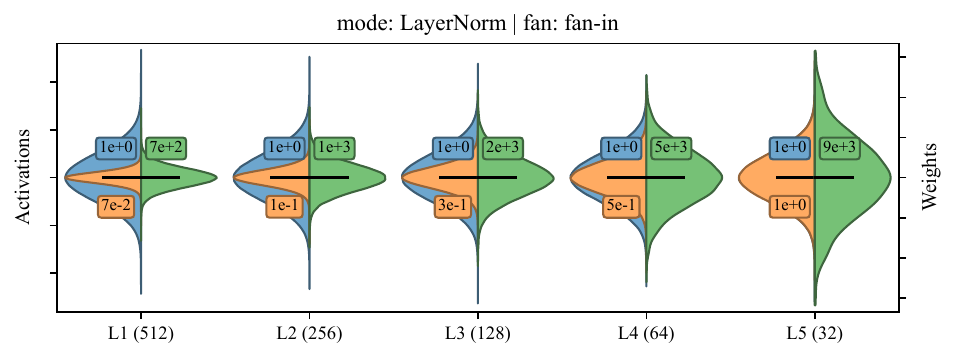}
    \end{subfigure}%
    \begin{subfigure}{0.5\linewidth}
        \caption{MLP with LayerNorm and fan-out}
        \includegraphics[width=1\linewidth,trim={0.84cm 0cm 0cm 0.6cm},clip]{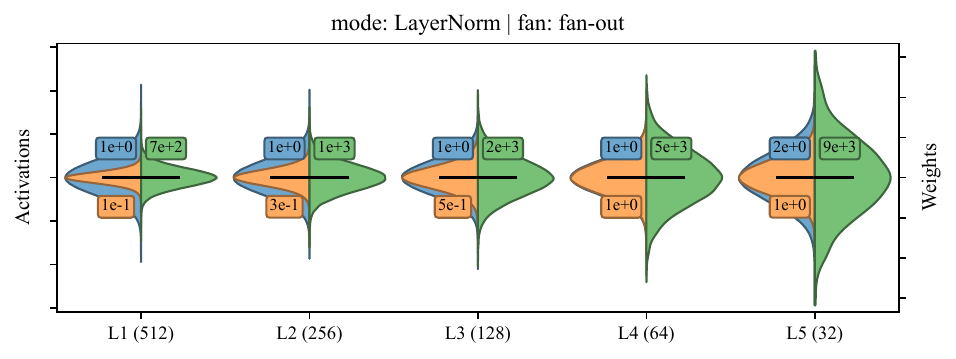}
    \end{subfigure}\\
    \begin{subfigure}{0.49\linewidth}
        \caption{MLP with fan-in and StableGrad}
        \includegraphics[width=1\linewidth,trim={0cm 0cm 0.72cm 0.6cm},clip]{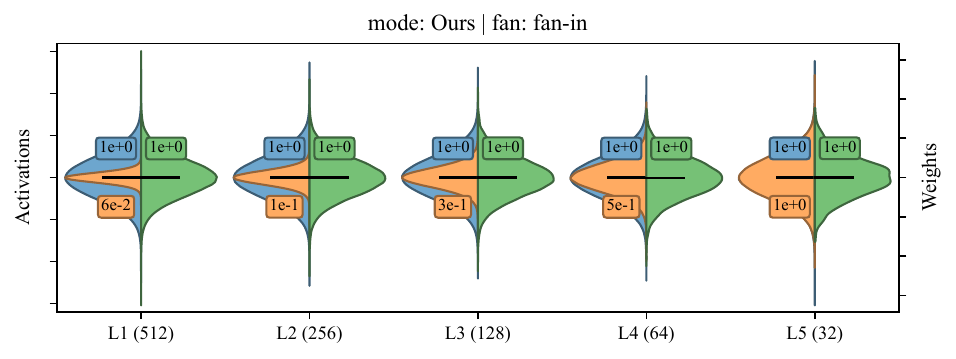}
    \end{subfigure}%
    \begin{subfigure}[t]{0.49\linewidth}
        \centering
        \includegraphics[width=0.5\linewidth,trim={1cm 0.5cm 1cm 1cm},clip]{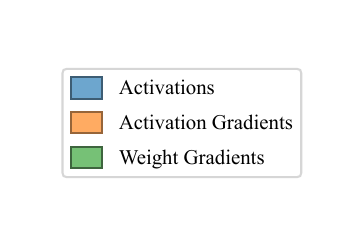}
    \end{subfigure}
    \caption{
    Forward and backward scale diagnostics across depth. StableGrad preserves the forward computation and equalizes the scale of layerwise weight gradients before the optimizer update.
    }
    \label{fig:gradient_flow}
\end{figure}

\section{Effective Training Dynamics}
\label{sec:effective_dynamics}

We now analyze the local effect of StableGrad on the training dynamics. The full derivations are given in Appendices~\ref{app:effective_kernel} and~\ref{app:local_decrease}.

Consider a weighted least-squares objective
\[
    \mathcal{L}(\theta)=\frac12\|r(\theta)\|^2,
\]
where \(r(\theta)\in\mathbb{R}^N\) is the vector of weighted residuals. This notation covers supervised residuals as well as the data, boundary, initial-condition, and PDE residuals used in PINNs. Let \(J=\partial r/\partial\theta\) be the residual Jacobian. Under the local linearization \(r(\theta+\Delta\theta)\simeq r(\theta)+J\Delta\theta\), a standard gradient step gives the residual dynamics
\[
    r^+ \simeq r-\eta JJ^\top r.
\]
The matrix \(K=JJ^\top\) is the empirical neural tangent kernel governing this local evolution.

StableGrad modifies the gradient before the optimizer step. If we define the block-diagonal matrix
\[
    P=\operatorname{diag}(\alpha_1I_1,\ldots,\alpha_LI_L),
    \qquad
    \alpha_\ell=\frac{\sigma_{\mathrm{out}}}{\sigma_\ell+\varepsilon},
\]
then the idealized StableGrad step is \(\Delta\theta=-\eta PJ^\top r\). The corresponding residual dynamics are governed by the effective kernel
\begin{equation}
    K_{\mathrm{SG}} = JPJ^\top .
    \label{eq:ksg}
\end{equation}
Writing the Jacobian by parameter blocks as \(J=[J_1,\ldots,J_L]\), this kernel decomposes as
\[
    K_{\mathrm{SG}}
    =
    \sum_{\ell=1}^L
    \alpha_\ell J_\ell J_\ell^\top .
\]
StableGrad therefore reweights the contribution of each layer to the functional training dynamics.

To measure how strongly a kernel acts on the current residual, we use the residual Rayleigh quotient
\[
    \rho_A(r)=\frac{r^\top A r}{\|r\|^2}.
\]
We denote by \(\rho=\rho_K(r)\) the standard value and by
\(\rho_{\mathrm{SG}}=\rho_{K_{\mathrm{SG}}}(r)\) the StableGrad value. Since \(g^\ell=J_\ell^\top r\), the change induced by StableGrad is explicit:
\[
    \rho_{\mathrm{SG}}-\rho
    =
    \frac{
    \sum_{\ell=1}^L
    (\alpha_\ell-1)\|g^\ell\|^2
    }{\|r\|^2}.
\]
Hence, \emph{StableGrad increases the kernel action on the current residual whenever the amplified blocks carry enough gradient energy to dominate the blocks that are downscaled}.

The following result connects this quantity with the decrease of the linearized loss.

\begin{theorem}[Local decrease under StableGrad]
\label{thm:local_decrease}
Let \(K=JJ^\top\) and \(K_{\mathrm{SG}}=JPJ^\top\). Consider the linearized residual dynamics generated by the standard step and by the StableGrad step. If
\begin{equation}
    \eta\lambda_{\max}(K_{\mathrm{SG}})<2
    \quad\text{and}\quad
    \rho_{\mathrm{SG}}
    \left(
        1-\frac{\eta}{2}\lambda_{\max}(K_{\mathrm{SG}})
    \right)
    >
    \rho,
    \label{eq:stablegrad_local_condition}
\end{equation}
then the StableGrad linearized step produces a larger decrease of
\(\frac12\|r\|^2\) than the standard linearized gradient step.
\end{theorem}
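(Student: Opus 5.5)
The plan is to compare the two linearized decreases directly, bounding the StableGrad decrease from below and the standard decrease from above by the same quantity. Under the local linearization, the standard step gives \(r^+=r-\eta Kr\) and the StableGrad step gives \(r^+_{\mathrm{SG}}=r-\eta K_{\mathrm{SG}}r\), since \(J\Delta\theta=-\eta JPJ^\top r\). For any symmetric \(A\) and \(r^+=r-\eta Ar\), expanding the square gives
\[
\tfrac12\|r\|^2-\tfrac12\|r^+\|^2=\eta\, r^\top A r-\tfrac{\eta^2}{2}\, r^\top A^2 r .
\]
I will call this quantity \(D_A\) and apply it with \(A=K\) and with \(A=K_{\mathrm{SG}}\). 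Throughout I assume \(r\neq 0\); otherwise both decreases vanish and there is nothing to prove.

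\emph{Upper bound for the standard step.} Since \(K=JJ^\top\) is symmetric, \(r^\top K^2 r=\|Kr\|^2\ge 0\). Hence \(D_K\le \eta\, r^\top K r=\eta\rho\|r\|^2\). This deliberately discards the quadratic term, which can only reduce the standard decrease.

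\emph{Lower bound for the StableGrad step.} First I check that \(K_{\mathrm{SG}}\) is symmetric positive semidefinite. Each \(\alpha_\ell=\sigma_{\mathrm{out}}/(\sigma_\ell+\varepsilon)\) is nonnegative because \(\varepsilon>0\), so \(P\succeq 0\) and therefore \(K_{\mathrm{SG}}=JPJ^\top\succeq 0\). For a PSD matrix \(A\) with eigendecomposition \(A=\sum_i\lambda_i u_iu_i^\top\), where \(\lambda_i\ge0\), we have
\[
r^\top A^2 r=\sum_i\lambda_i^2(u_i^\top r)^2\le\lambda_{\max}(A)\sum_i\lambda_i (u_i^\top r)^2=\lambda_{\max}(A)\,r^\top A r .
\]
Applying this with \(A=K_{\mathrm{SG}}\) gives
\[
D_{K_{\mathrm{SG}}}\ge \eta\, r^\top K_{\mathrm{SG}}r\left(1-\tfrac{\eta}{2}\lambda_{\max}(K_{\mathrm{SG}})\right)=\eta\|r\|^2\rho_{\mathrm{SG}}\left(1-\tfrac{\eta}{2}\lambda_{\max}(K_{\mathrm{SG}})\right).
\]
The first condition in \eqref{eq:stablegrad_local_condition} makes the bracket positive, so this lower bound is itself a genuine decrease.

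\emph{Conclusion.} Chaining the two bounds with the second condition in \eqref{eq:stablegrad_local_condition} gives
\[
D_{K_{\mathrm{SG}}}\ \ge\ \eta\|r\|^2\rho_{\mathrm{SG}}\left(1-\tfrac{\eta}{2}\lambda_{\max}(K_{\mathrm{SG}})\right)\ >\ \eta\|r\|^2\rho\ \ge\ D_K,
\]
which is the claim. The only step with real content is the spectral inequality \(r^\top A^2r\le\lambda_{\max}(A)\,r^\top Ar\). It relies on \(K_{\mathrm{SG}}\) being PSD, which in turn requires \(\alpha_\ell\ge 0\). That holds automatically here because each \(\alpha_\ell\) is a ratio of a standard deviation to a positive quantity, but it must be stated explicitly in the proof.
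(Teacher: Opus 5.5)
Your proof is correct and follows the paper's argument step for step. It uses the same decrease identity \(\eta\, r^\top A r-\frac{\eta^2}{2}\, r^\top A^2 r\), the same lower bound via \(r^\top K_{\mathrm{SG}}^2 r\le\lambda_{\max}(K_{\mathrm{SG}})\, r^\top K_{\mathrm{SG}} r\), and the same upper bound for the standard step obtained by dropping the nonnegative quadratic term; your explicit check that \(\alpha_\ell\ge 0\) (hence \(K_{\mathrm{SG}}\succeq 0\)) fills in a step the paper only asserts.
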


The proof is given in Appendix~\ref{app:local_decrease}. The theorem separates the two quantities that matter locally. The term \(\rho_{\mathrm{SG}}\) measures the useful action of the effective kernel on the current residual, while \(\eta\lambda_{\max}(K_{\mathrm{SG}})\) controls the stability of the local step. Thus, StableGrad improves the local linearized decrease when it increases the action of the training kernel on the residual and keeps the step in a stable regime.

This result also gives a direct interpretation of the method. StableGrad is beneficial when the backward scale imbalance suppresses layers that still contain useful directions for reducing the current residual. By rescaling those blocks, the method increases their contribution to \(K_{\mathrm{SG}}\). The theorem then states when this increased contribution translates into a larger local decrease of the loss.

Appendix~\ref{app:effective_diagnostics} provides a controlled Burgers PINN diagnostic that directly measures the quantities appearing in Theorem~\ref{thm:local_decrease}. 
In that experiment, StableGrad reduces the validation loss much faster than AdamW, reaching by epoch 1000 a lower loss than AdamW attains at the end of training for 5000 epochs. 
The diagnostic quantities explain this behavior: StableGrad flattens the layer-wise gradient scales, keeps the stability factor \(\eta\lambda_{\max}(K_{\mathrm{SG}})\) far below the instability threshold, and satisfies the theorem margin during the phases where the loss decreases most effectively. 
The few checkpoints where the margin becomes negative coincide with the transient plateau in the loss curve, where the theorem no longer predicts an improved local decrease. 
Thus, the empirical dynamics match the theoretical picture: StableGrad improves the effective backward dynamics while leaving the forward physical model unchanged.

A natural question is whether StableGrad mainly acts as an implicit learning-rate schedule, since rescaling gradients can change the effective step size seen by the optimizer. 
Appendix~\ref{app:lr_scheduler_control} addresses this directly with a control experiment in which AdamW is equipped with a piecewise learning-rate multiplier chosen from the observed spectral ratio \(\lambda_{\max}(K_{\mathrm{SG}})/\lambda_{\max}(K)\). 
This boosted AdamW baseline improves over standard AdamW, but it does not reproduce StableGrad: the residual loss remains higher and the layer-wise update distribution stays much more concentrated. 
Thus, the effect of StableGrad is not reducible to a global learning-rate increase; it changes the effective geometry of the update.

\section{Evaluation}\label{sec:evaluation}

We evaluate StableGrad along two complementary axes. First, we use deep CNNs as a diagnostic setting to test whether optimizer-level gradient normalization can preserve trainability when BatchNorm is removed from architectures that normally rely on it. This setting allows us to observe whether training remains numerically stable without architectural normalization. Second, we evaluate deep PINNs, where BatchNorm is not a suitable stabilizer, and test whether forward-stable initialization combined with StableGrad improves depth scaling and solution accuracy.

\subsection{CNNs}\label{subsec:CNNs}

We first evaluate StableGrad on image classification models where BatchNorm is a standard component of the architecture. 
We consider EfficientNetV2-S \citep{efficientnetv2} on CIFAR-100 \citep{krizhevsky2009learning} and ResNet-50 \citep{he2016deep} on ImageNet-1k \citep{deng2009imagenet}. 
For each model, we compare three variants: the default architecture with BatchNorm (Vanilla), the same architecture with BatchNorm removed (Without BN), and the BatchNorm-free architecture trained with StableGrad. 
All variants are trained under the same protocol except for the presence of BatchNorm and the use of gradient normalization; Appendix~\ref{app:std_wo_bn} further analyzes the BatchNorm-free failure mode by tracking how activation scales evolve until training collapse. 
To check that the effect is not obtained by a trivial scale-removal rule, Appendix~\ref{app:sign_gradient_comparison} also compares against sign-based gradient preprocessing, which homogenizes gradients more aggressively but fails to train in the same EfficientNet setting. 
Further implementation and training details are provided in Appendix~\ref{app:eval_details}.

\begin{figure}[tbh]
    \centering
    \begin{subfigure}{0.48\linewidth}
        \caption{EfficientNetV2-S on CIFAR-100}
        \includegraphics[width=1\linewidth,trim={0cm 0cm 0cm 0.65cm},clip]{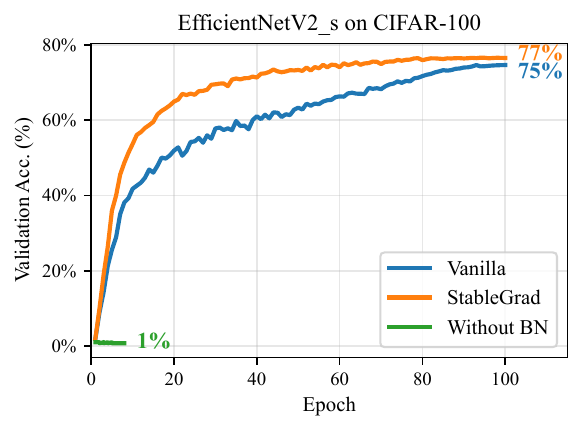}
    \end{subfigure}%
    \begin{subfigure}{0.48\linewidth}
        \caption{ResNet-50 on ImageNet-1k}
        \includegraphics[width=1\linewidth,trim={0cm 0cm 0cm 0.65cm},clip]{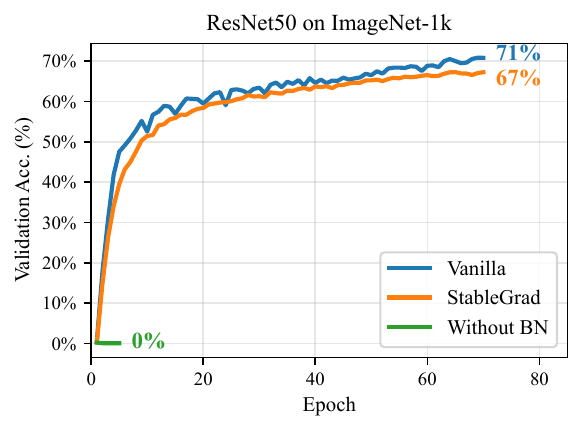}
    \end{subfigure}\\
    \caption{
    Training dynamics of deep CNNs with and without BatchNorm. 
    The default architectures with BatchNorm and the BatchNorm-free variants trained with StableGrad remain trainable, while removing BatchNorm without gradient control leads to training collapse after a few iterations due to forward-pass overflow. Differences between the BatchNorm and StableGrad curves reflect different effective learning-rate dynamics and should not be interpreted as a direct training-speed comparison.
    }
    \label{fig:cnn_training}
\end{figure}

Figure~\ref{fig:cnn_training} shows the diagnostic role of the CNN experiments. 
In both EfficientNetV2-S and ResNet-50, simply removing BatchNorm makes training collapse almost immediately, confirming that these architectures rely strongly on architectural normalization for stable optimization. 
StableGrad prevents this collapse without reintroducing any forward normalization. 
On EfficientNetV2-S, the BatchNorm-free model trained with StableGrad converges faster and reaches a higher validation accuracy than the BatchNorm baseline, exceeding \(77\%\) compared with roughly \(75\%\) for the default model. 
On ResNet-50, StableGrad remains close to the BatchNorm baseline, reaching about \(67\%\) validation accuracy while the default BatchNorm model reaches about \(71\%\). 
The relevant point is not a one-to-one speed comparison, since BatchNorm and StableGrad induce different effective training dynamics, but rather that optimizer-level gradient normalization makes otherwise non-trainable BatchNorm-free CNNs train stably and reach competitive accuracy while leaving the forward architecture unnormalized.

\subsection{PINNs}

StableGrad is evaluated in the target setting of deep PINNs. The experiments consider three PDE benchmarks commonly used in physics-informed learning: Burgers' equation with viscosity \(\nu=10^{-4}\), Poisson's equation, and a high-frequency \(k=10\pi\) Helmholtz problem \citep{raissi2019physics,hao2024pinnacle}. For each benchmark, fully connected PINNs with depths \(6\) and \(12\) are trained. Burgers and Poisson use tanh activations, while Helmholtz uses SiLU activations together with Fourier feature inputs. All PDE constraints are enforced softly through the optimization objective, by penalizing the PDE residual together with the corresponding boundary-condition and initial-condition losses.

The comparison is between AdamW \citep{loshchilov2018decoupled} and AdamW equipped with StableGrad. The AdamW baseline is trained for \(50{,}000\) optimization steps. In the StableGrad setting, StableGrad is applied during the first \(25{,}000\) steps. Once the layer-wise gradient scales have stabilized, training continues with standard AdamW for an additional \(25{,}000\) fine-tuning steps. Further benchmark, architecture, and hyperparameter details are provided in Appendix~\ref{app:eval_details}.

Table~\ref{tab:pinn_eval} reports validation metrics for each benchmark: relative \(L_2\) error, PDE residual loss, boundary-condition loss, and, for Burgers' equation, initial-condition loss. All reported PINN values are means over three independent runs with different random seeds. Across all benchmark--depth configurations, AdamW+StableGrad outperforms the corresponding Vanilla AdamW baseline in nearly all reported metrics. This holds for both shallow and deeper PINNs, and for both solution accuracy and physical consistency terms. In some cases the improvement is large, while in others the gains are more modest; nevertheless, the direction is consistent across all reported comparisons.

\begin{table}[ht]
\centering
\caption{Evaluation comparison between vanilla and StableGrad with increasing PINN depth in validation points.}
\label{tab:pinn_eval}
\begin{tabular*}{\columnwidth}{@{\extracolsep{\fill}}ccccccc@{}}
\toprule
\textbf{Experiment} &
\textbf{Depth} &
\textbf{Mode} &
\textbf{Validation L2} &
\textbf{PDE loss} &
\textbf{IC loss} &
\textbf{BC loss} \\
\midrule[1.2pt]

\multirow[c]{4}{*}{Burgers}
& \multirow[c]{2}{*}{6} & Vanilla    & 1.3e-1 & 1.0e-3 & 1.2e-5 & 4.6e-8 \\
&                       & StableGrad & \textbf{8.5e-2} & \textbf{6.3e-4} & \textbf{7.6e-6} & \textbf{1.8e-8} \\
\cmidrule{2-7}
& \multirow[c]{2}{*}{12} & Vanilla    & 1.6e-1 & 6.8e-3 & 8.2e-5 & 4.7e-8 \\
&                        & StableGrad & \textbf{1.1e-2} & \textbf{9.0e-4} & \textbf{8.7e-6} & \textbf{3.1e-8} \\

\midrule[1.2pt]

\multirow[c]{4}{*}{Poisson}
& \multirow[c]{2}{*}{6} & Vanilla    & 6.5e-5 & 8.6e-6 & -- & 7.9e-9 \\
&                       & StableGrad & \textbf{2.6e-5} & \textbf{1.8e-6} & -- & \textbf{1.7e-9} \\
\cmidrule{2-7}
& \multirow[c]{2}{*}{12} & Vanilla    & 5.9e-5 & 4.0e-6 & -- & 7.7e-9 \\
&                        & StableGrad & \textbf{3.8e-5} & \textbf{3.1e-6} & -- & \textbf{3.5e-9} \\

\midrule[1.2pt]

\multirow[c]{4}{*}{Helmholtz}
& \multirow[c]{2}{*}{6} & Vanilla    & 4.6e-3 & 6.0e-6 & -- & 1.9e-8 \\
&                       & StableGrad & \textbf{3.1e-3} & 6.0e-6 & -- & \textbf{1.7e-8} \\
\cmidrule{2-7}
& \multirow[c]{2}{*}{12} & Vanilla    & 6.5e-3 & 1.0e-5 & -- & 2.5e-8 \\
&                        & StableGrad & \textbf{2.3e-3} & \textbf{6.6e-6} & -- & \textbf{1.7e-8} \\

\bottomrule
\end{tabular*}
\end{table}

These results indicate that StableGrad improves the matched-depth optimization of PINNs rather than merely shifting error between different components of the objective. The method reduces the validation \(L_2\) error while also lowering the PDE, boundary-condition, and, when applicable, initial-condition losses. Thus, the improvements are not obtained by fitting the solution field at the expense of the physical constraints, but by producing solutions that are both more accurate and more consistent with the imposed equations.

The depth comparison further shows that simply increasing PINN capacity is not sufficient under standard optimization. Vanilla AdamW does not consistently benefit from moving from depth \(6\) to depth \(12\), and in some benchmarks the deeper baseline remains similar to, or worse than, its shallower counterpart. StableGrad mitigates this depth-scaling difficulty: at each evaluated depth, it produces better validation metrics than Vanilla, and the deeper StableGrad models remain trainable and competitive across all three PDE benchmarks. The Poisson case illustrates an important nuance. Since this benchmark is comparatively simple, the depth-\(12\) StableGrad model obtains lower training losses but slightly worse validation metrics than its depth-\(6\) counterpart, suggesting that the additional capacity may lead to overfitting rather than improved generalization. Thus, StableGrad improves the optimization of deeper PINNs, but additional depth is not automatically beneficial when the target problem does not require the extra capacity. These results support the main claim of the paper: when architectural normalization is unavailable or undesirable, controlling layer-wise gradient scale before the optimizer step provides a practical mechanism for improving the trainability and accuracy of deep PINNs.

\section{Conclusion}
Depth turns scale propagation into a bidirectional problem: activations must remain well scaled in the forward pass, while adjoint signals and weight gradients must remain useful in the backward pass. Classical initialization schemes address this tension only at the beginning of training and necessarily make a trade-off between forward- and backward-preserving fan modes. Architectural normalization layers, especially BatchNorm, alleviate this burden by repeatedly resetting intermediate scales, but they also introduce forward-pass dependencies that are undesirable in settings such as PINNs, where the network output and its input derivatives define a physical field and its residual.

We introduced StableGrad, an optimizer-level mechanism focused on stabilizing backward scale. It keeps layer-wise gradient magnitudes controlled during training by normalizing weight gradients after backpropagation and before the optimizer update. Since this operation is applied only at the update stage, StableGrad leaves the forward model, its derivatives, and the physical residual unchanged, providing backward scale control without inserting normalization layers into the architecture. It therefore pairs naturally with activation-aware fan-in initializations that preserve forward scale.

The experiments support this separation. In BatchNorm-free CNNs, StableGrad prevents the optimization collapse that occurs when BatchNorm is removed from architectures that normally rely on it. In PINNs, where BatchNorm is not an appropriate stabilizer, StableGrad improves matched-depth accuracy across Burgers, Poisson, and Helmholtz benchmarks and mitigates the optimization difficulties observed in deeper models. These results suggest that optimizer-level control of gradient scale can be a practical alternative when architectural normalization is unavailable or undesirable.

StableGrad also has important limitations. The method normalizes the weight gradients after the backward pass, but it does not prevent the raw adjoint signals from becoming unstable while they are being propagated through the network. Similarly, although the initialization is designed to start from a forward-stable regime, StableGrad does not by itself guarantee that activation magnitudes will remain stable throughout training. Thus, the method should be viewed as a backward-scale correction mechanism rather than a complete solution to all forward and backward stability problems.

Future work should study how to guarantee forward-pass stability during training while preserving the physical interpretation required by PINNs. Another promising direction is to combine layer-wise gradient normalization with automatic learning-rate adaptation, for example with techniques related to \cite{defazio2024the}, in order to control the effective step size and reduce dependence on a learning-rate scheduler. Finally, the evaluation should be extended beyond the PINN and CNN settings considered here, including SIREN-style models, implicit neural representations, and other architectures where forward normalization is difficult to use but depth remains essential.

\begin{ack}
TBD
\end{ack}

\bibliographystyle{apalike}
\bibliography{biblio}


\appendix

\section{Additional Theory}
\label{app:additional_theory}

This section collects the theoretical details that support the main analysis of StableGrad. 
Appendix~\ref{app:propagation} explains why standard scalar variance-propagation arguments become insufficient for PINNs with differential residuals, while Appendix~\ref{app:reference_scale} discusses alternative global reference scales for the StableGrad normalization. 
Appendices~\ref{app:effective_kernel} and~\ref{app:local_decrease} then derive the StableGrad effective kernel, the associated Rayleigh-quotient identity, and the proof of the local decrease theorem used in the main text.

\subsection{Signal Propagation with Differential Residuals}
\label{app:propagation}

This appendix summarizes why the usual variance-preservation argument for standard multilayer perceptrons becomes more involved in PINNs.

Consider the feedforward recursion \(z_\ell=W_\ell h_\ell\) and \(h_{\ell+1}=\phi(z_\ell)\), with independently initialized zero-mean weights of variance \(\sigma_W^2\). Under the standard independence and equal-scale assumptions, the forward variance is approximately
\[
    \operatorname{Var}(h_{\ell+1})
    \approx
    n_\ell\sigma_W^2\operatorname{Var}(h_\ell).
\]
If \(\delta_\ell=\partial\mathcal{L}/\partial h_\ell\) denotes the backpropagated adjoint, then the backward recursion gives
\[
    \operatorname{Var}(\delta_\ell)
    \approx
    n_{\ell+1}\sigma_W^2
    \mathbb{E}[\phi'(z_\ell)^2]
    \operatorname{Var}(\delta_{\ell+1}).
\]
Thus, in a standard MLP, the forward and backward scales can be approximated by scalar variance recursions. This is precisely the setting in which fan-in, fan-out, and fan-in/fan-out initialization rules are naturally derived.

PINNs add a different difficulty. The loss may depend on input derivatives of the network output. To see the effect, consider the first derivative channel \(p_\ell=\partial_x h_\ell\). Differentiating the layer recursion gives \(p_{\ell+1}=\phi'(z_\ell)\odot W_\ell p_\ell\). Now suppose that the loss depends on both \(h_{\ell+1}\) and \(p_{\ell+1}\). Define the adjoints \(a_{\ell+1}=\partial\mathcal{L}/\partial h_{\ell+1}\) and \(b_{\ell+1}=\partial\mathcal{L}/\partial p_{\ell+1}\). Backpropagation gives
\[
    b_\ell
    =
    W_\ell^\top
    \left(
        \phi'(z_\ell)\odot b_{\ell+1}
    \right),
\]
while the activation-channel adjoint satisfies
\[
    a_\ell
    =
    W_\ell^\top
    \left(
        \phi'(z_\ell)\odot a_{\ell+1}
        +
        \phi''(z_\ell)\odot (W_\ell p_\ell)\odot b_{\ell+1}
    \right).
\]
The second term has no analogue in a standard supervised MLP. It couples the derivative-channel adjoint \(b_{\ell+1}\) into the activation-channel adjoint \(a_\ell\), and depends on \(\phi''\), on the derivative state \(p_\ell\), and on the weights.

Higher-order differential residuals introduce higher derivative channels and higher derivatives of the activation. Consequently, the PINN backward pass is not governed by a single scalar variance recursion. A single initialization variance cannot, in general, simultaneously preserve the scale of all derivative-dependent backward channels. This motivates a dynamic correction of gradient scale during training.

\subsection{Reference Scale and Useful Variants}
\label{app:reference_scale}

StableGrad uses the output-adjoint scale, setting \(\alpha_\ell=\sigma_{\mathrm{out}}/(\sigma_\ell+\varepsilon)\). This choice follows the backward-scale principle: the scale of the signal that initiates the backward pass is transferred to every parameter block.

A more general family is obtained by writing \(\alpha_\ell(c)=c/(\sigma_\ell+\varepsilon)\), where \(c>0\) is a global reference scale. Different choices of \(c\) preserve different quantities and can be useful for ablations.

A norm-preserving reference scale is obtained by imposing \(\|\widetilde g\|=\|g\|\). Since \(\widetilde g^\ell=cg^\ell/(\sigma_\ell+\varepsilon)\), this gives
\[
    c_{\mathrm{norm}}
    =
    \left(
    \frac{
    \sum_{\ell=1}^L \|g^\ell\|^2
    }{
    \sum_{\ell=1}^L
    \|g^\ell\|^2/(\sigma_\ell+\varepsilon)^2
    }
    \right)^{1/2}.
\]
An inner-product preserving reference scale is obtained by imposing \(g^\top\widetilde g=g^\top g\). This gives
\[
    c_{\mathrm{ip}}
    =
    \frac{
    \sum_{\ell=1}^L \|g^\ell\|^2
    }{
    \sum_{\ell=1}^L
    \|g^\ell\|^2/(\sigma_\ell+\varepsilon)
    }.
\]
These alternatives separate the effect of layerwise balancing from changes in global scale. The StableGrad choice \(c=\sigma_{\mathrm{out}}\) follows a different criterion: it anchors every block gradient to the natural scale of the backward signal at the output.

\subsection{Effective Kernel Induced by StableGrad}
\label{app:effective_kernel}

We derive the effective kernel used in Section~\ref{sec:effective_dynamics} and prove the identity for the residual Rayleigh quotient.

\begin{proposition}[StableGrad effective kernel]
\label{prop:effective_kernel}
Let \(\mathcal{L}(\theta)=\frac12\|r(\theta)\|^2\), let \(J=\partial r/\partial\theta\), and partition the parameters as \(\theta=(\theta^1,\ldots,\theta^L)\). Write the Jacobian by blocks as \(J=[J_1,\ldots,J_L]\). If StableGrad rescales each block gradient by \(\alpha_\ell\), then the linearized residual dynamics are governed by
\[
    K_{\mathrm{SG}}=JPJ^\top
    =
    \sum_{\ell=1}^L \alpha_\ell J_\ell J_\ell^\top,
\]
where \(P=\operatorname{diag}(\alpha_1I_1,\ldots,\alpha_LI_L)\).
\end{proposition}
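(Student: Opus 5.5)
The plan is a direct computation from the linearization, in three short steps: compute the gradient, write the StableGrad step as a preconditioned step, and substitute it into the linearized residual.

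\textbf{Gradient and preconditioned step.} By the chain rule applied to \(\mathcal{L}(\theta)=\frac12\|r(\theta)\|^2\), the full gradient is \(\nabla_\theta\mathcal{L}=J^\top r\). Restricting to block \(\ell\) gives \(g^\ell=J_\ell^\top r\), since \(J_\ell=\partial r/\partial\theta^\ell\) is the corresponding column block of \(J\). The StableGrad direction stacks the rescaled blocks \(\widetilde g^\ell=\alpha_\ell J_\ell^\top r\). Because \(P\) is block diagonal with blocks \(\alpha_\ell I_\ell\) whose sizes match the partition of \(\theta\), this stacked vector is exactly \(PJ^\top r\). The idealized StableGrad step is therefore \(\Delta\theta=-\eta PJ^\top r\).

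\textbf{Linearized residual dynamics.} Substituting this step into \(r(\theta+\Delta\theta)\simeq r(\theta)+J\Delta\theta\) gives
\[
r^+\simeq r-\eta\,JPJ^\top r .
\]
This has the same form as the standard dynamics \(r^+\simeq r-\eta JJ^\top r\), with \(K\) replaced by \(K_{\mathrm{SG}}=JPJ^\top\). This identifies \(K_{\mathrm{SG}}\) as the governing kernel.

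\textbf{Block decomposition.} Expand the product by blocks:
\[
JPJ^\top=[J_1,\ldots,J_L]\,\operatorname{diag}(\alpha_\ell I_\ell)\,[J_1,\ldots,J_L]^\top=\sum_{\ell=1}^L\alpha_\ell J_\ell J_\ell^\top .
\]
The off-diagonal terms vanish because \(P\) is block diagonal. Since every \(\alpha_\ell>0\) (because \(\sigma_{\mathrm{out}}\ge0\) and \(\varepsilon>0\)), \(K_{\mathrm{SG}}\) is a nonnegative combination of positive semidefinite matrices. It is therefore symmetric positive semidefinite, which justifies calling it a kernel.

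\textbf{Main obstacle: \(\alpha_\ell\) depends on \(\theta\).} The scales \(\alpha_\ell\) are built from \(\sigma_\ell\) and \(\sigma_{\mathrm{out}}\), which depend on \(\theta\) and on \(r\). The step is thus not literally linear in \(r\). I would handle this by treating \(P\) as frozen at the current iterate. This is consistent with the word ``idealized'' in the main text: the linearization describes a single step, so the \(\alpha_\ell\) are constants during that step. Under that convention the argument is exact to first order.

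As a corollary, the Rayleigh-quotient identity follows by the same computation:
\[
r^\top K_{\mathrm{SG}}r=\sum_{\ell=1}^L\alpha_\ell\|J_\ell^\top r\|^2=\sum_{\ell=1}^L\alpha_\ell\|g^\ell\|^2 .
\]
Subtracting \(r^\top K r=\sum_{\ell=1}^L\|g^\ell\|^2\) and dividing by \(\|r\|^2\) gives \(\rho_{\mathrm{SG}}-\rho\) as stated in Section~\ref{sec:effective_dynamics}.
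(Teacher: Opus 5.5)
Your proof is correct and follows the paper's argument step for step: the gradient $g=J^\top r$, the rescaled step $\Delta\theta=-\eta PJ^\top r$, substitution into the linearization, and block expansion of $JPJ^\top$. Freezing $P$ at the current iterate and noting positive semidefiniteness are sound additions, though $\sigma_{\mathrm{out}}\ge 0$ only gives $\alpha_\ell\ge 0$ rather than $\alpha_\ell>0$, which is all the semidefiniteness claim needs.
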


\begin{proof}
Since \(\mathcal{L}(\theta)=\frac12\|r(\theta)\|^2\), the gradient is \(g=J^\top r\). The block gradient is therefore \(g^\ell=J_\ell^\top r\). StableGrad rescales the gradient as \(\widetilde g=Pg\), with \(P=\operatorname{diag}(\alpha_1I_1,\ldots,\alpha_LI_L)\).

Using the local approximation \(r(\theta+\Delta\theta)\simeq r(\theta)+J\Delta\theta\), the idealized StableGrad step \(\Delta\theta=-\eta PJ^\top r\) gives \(r^+\simeq r-\eta JPJ^\top r\). Hence the effective kernel is \(K_{\mathrm{SG}}=JPJ^\top\). Finally, because \(J=[J_1,\ldots,J_L]\) and \(P\) is block diagonal, \(JPJ^\top=\sum_{\ell=1}^L \alpha_\ell J_\ell J_\ell^\top\).
\end{proof}

We now prove the identity used in the main text.

\begin{proposition}[Residual Rayleigh quotient under StableGrad]
\label{prop:rho_identity_appendix}
Let \(K=JJ^\top\) and \(K_{\mathrm{SG}}=JPJ^\top\). Define
\[
    \rho=\frac{r^\top K r}{\|r\|^2},
    \qquad
    \rho_{\mathrm{SG}}=\frac{r^\top K_{\mathrm{SG}} r}{\|r\|^2}.
\]
Then
\[
    \rho_{\mathrm{SG}}-\rho
    =
    \frac{
    \sum_{\ell=1}^L(\alpha_\ell-1)\|g^\ell\|^2
    }{\|r\|^2},
\]
where \(g^\ell=J_\ell^\top r\).
\end{proposition}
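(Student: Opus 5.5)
The identity follows directly from the block decomposition of the effective kernel in Proposition~\ref{prop:effective_kernel}. The plan is to expand both quadratic forms $r^\top K r$ and $r^\top K_{\mathrm{SG}} r$ block by block and compare them term by term. Since both Rayleigh quotients share the denominator $\|r\|^2$, it suffices to compute the difference of the numerators, $r^\top (K_{\mathrm{SG}}-K) r$, and then divide.

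First, write $K=JJ^\top=\sum_{\ell=1}^L J_\ell J_\ell^\top$, which follows from $J=[J_1,\ldots,J_L]$. This is the special case $P=I$, i.e.\ $\alpha_\ell=1$ for all $\ell$, of the decomposition in Proposition~\ref{prop:effective_kernel}. Together with $K_{\mathrm{SG}}=\sum_\ell \alpha_\ell J_\ell J_\ell^\top$ from that proposition, this gives
\[
K_{\mathrm{SG}}-K=\sum_{\ell=1}^L(\alpha_\ell-1)J_\ell J_\ell^\top .
\]
Next, apply the quadratic form in $r$ to each block:
\[
r^\top J_\ell J_\ell^\top r=(J_\ell^\top r)^\top(J_\ell^\top r)=\|J_\ell^\top r\|^2 .
\]
As shown in the proof of Proposition~\ref{prop:effective_kernel}, the block gradient of $\mathcal{L}=\tfrac12\|r\|^2$ is $g^\ell=J_\ell^\top r$. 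Substituting this identification, linearity of the quadratic form over the finite sum gives
\[
r^\top(K_{\mathrm{SG}}-K)r=\sum_{\ell=1}^L(\alpha_\ell-1)\|g^\ell\|^2 .
\]
Dividing by $\|r\|^2$, assumed nonzero so that the Rayleigh quotients are defined, yields the claim.

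There is no real obstacle here. The only points needing care are the following:
\begin{itemize}
\item The $\alpha_\ell$ are treated as fixed scalars at the current iterate. They depend on $g$ through $\sigma_{\mathrm{out}}$ and $\sigma_\ell$, but the identity is pointwise in $\theta$, so this dependence does not matter.
\item The blocks $I_\ell$ of $P$ must be aligned with the column partition of $J$, so that $JPJ^\top$ splits cleanly.
\end{itemize}
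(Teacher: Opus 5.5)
Your proof is correct and essentially matches the paper's argument: the paper also uses the block decomposition $K_{\mathrm{SG}}=\sum_{\ell}\alpha_\ell J_\ell J_\ell^\top$ to obtain $r^\top K_{\mathrm{SG}} r=\sum_\ell \alpha_\ell\|g^\ell\|^2$ and $r^\top K r=\sum_\ell \|g^\ell\|^2$, then subtracts and divides by $\|r\|^2$. The only difference is cosmetic: you subtract the kernels before applying the quadratic form rather than after.
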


\begin{proof}
Using the block decomposition of \(K_{\mathrm{SG}}\), we have
\[
    r^\top K_{\mathrm{SG}}r
    =
    \sum_{\ell=1}^L
    \alpha_\ell r^\top J_\ell J_\ell^\top r
    =
    \sum_{\ell=1}^L
    \alpha_\ell \|J_\ell^\top r\|^2.
\]
Since \(g^\ell=J_\ell^\top r\), this is \(\sum_{\ell=1}^L\alpha_\ell\|g^\ell\|^2\). Similarly, \(r^\top K r=\sum_{\ell=1}^L\|g^\ell\|^2\). Subtracting both expressions and dividing by \(\|r\|^2\) gives the claim.
\end{proof}

\subsection{Proof of the Local Decrease Theorem}
\label{app:local_decrease}

We prove Theorem \ref{thm:local_decrease} stated in Section~\ref{sec:effective_dynamics}. The result is local: it applies to the linearized residual dynamics around the current parameters.

\setcounter{theorem}{0}
\begin{theorem}[Local decrease under StableGrad]
\label{thm:local_decrease_appendix}
Let \(K=JJ^\top\) and \(K_{\mathrm{SG}}=JPJ^\top\). Consider the standard linearized step \(r^+_{\mathrm{std}}=r-\eta Kr\) and the StableGrad linearized step \(r^+_{\mathrm{SG}}=r-\eta K_{\mathrm{SG}}r\). Let
\[
    \rho=\frac{r^\top K r}{\|r\|^2},
    \qquad
    \rho_{\mathrm{SG}}=\frac{r^\top K_{\mathrm{SG}}r}{\|r\|^2}.
\]
If \(\eta\lambda_{\max}(K_{\mathrm{SG}})<2\) and
\[
    \rho_{\mathrm{SG}}
    \left(
        1-\frac{\eta}{2}\lambda_{\max}(K_{\mathrm{SG}})
    \right)
    >
    \rho,
\]
then the StableGrad linearized step produces a larger decrease of \(\frac12\|r\|^2\) than the standard linearized gradient step.
\end{theorem}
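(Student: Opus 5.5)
The plan is to write the exact decrease of $\frac12\|r\|^2$ for a linearized step with a generic symmetric positive semidefinite kernel. I would then bound the StableGrad decrease from below and the standard decrease from above, and compare the two bounds through the hypothesis. Throughout I assume $\eta>0$ and $r\neq 0$; if $r=0$ both decreases vanish and there is nothing to prove. For a symmetric matrix $A$ and $r^+=r-\eta Ar$, expanding the square gives
\[
    \tfrac12\|r\|^2-\tfrac12\|r^+\|^2
    =
    \eta\, r^\top A r-\tfrac{\eta^2}{2}\|Ar\|^2
    =:\Delta_A .
\]
I apply this with $A=K$ and with $A=K_{\mathrm{SG}}$. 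Both are symmetric because $P$ is diagonal.

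\emph{Upper bound for the standard step.} Since $\|Kr\|^2\ge 0$, we get $\Delta_K\le \eta\, r^\top K r=\eta\rho\|r\|^2$. This simply discards the nonnegative curvature penalty.

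\emph{Lower bound for the StableGrad step.} I use the decomposition $K_{\mathrm{SG}}=\sum_\ell \alpha_\ell J_\ell J_\ell^\top$ from Proposition~\ref{prop:effective_kernel}. Because every $\alpha_\ell=\sigma_{\mathrm{out}}/(\sigma_\ell+\varepsilon)$ is nonnegative, $K_{\mathrm{SG}}$ is positive semidefinite. Diagonalize it as $K_{\mathrm{SG}}=\sum_i\lambda_i u_iu_i^\top$ with $0\le\lambda_i\le\lambda_{\max}(K_{\mathrm{SG}})$. Then
\[
\|K_{\mathrm{SG}}r\|^2=\sum_i\lambda_i^2(u_i^\top r)^2\le\lambda_{\max}(K_{\mathrm{SG}})\, r^\top K_{\mathrm{SG}}r .
\]
Substituting this bound gives
\[
    \Delta_{K_{\mathrm{SG}}}\ge \eta\,\rho_{\mathrm{SG}}\|r\|^2\Bigl(1-\tfrac{\eta}{2}\lambda_{\max}(K_{\mathrm{SG}})\Bigr).
\]
The condition $\eta\lambda_{\max}(K_{\mathrm{SG}})<2$ makes the bracket positive, so this lower bound is a genuine decrease.

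\emph{Comparison.} Chaining the two bounds with the second hypothesis gives
\[
\Delta_{K_{\mathrm{SG}}}\ \ge\ \eta\|r\|^2\rho_{\mathrm{SG}}\Bigl(1-\tfrac{\eta}{2}\lambda_{\max}(K_{\mathrm{SG}})\Bigr)\ >\ \eta\|r\|^2\rho\ \ge\ \Delta_K .
\]
This is the claim.

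The only delicate step is the semidefiniteness of $K_{\mathrm{SG}}$, which the inequality $\|Ar\|^2\le\lambda_{\max}\,r^\top A r$ requires. I would justify it from $\alpha_\ell\ge0$. In the degenerate case $\sigma_{\mathrm{out}}=0$, we have $\rho_{\mathrm{SG}}=0$, so the hypothesis $0>\rho\ge0$ cannot hold and the statement is vacuous.
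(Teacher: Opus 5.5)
Your proposal is correct and follows the paper's proof essentially step for step. Both use the exact decrease identity $\eta r^\top A r-\frac{\eta^2}{2}r^\top A^2 r$, bound the StableGrad step from below via $r^\top K_{\mathrm{SG}}^2 r\le\lambda_{\max}(K_{\mathrm{SG}})\,r^\top K_{\mathrm{SG}}r$, bound the standard step from above by dropping the nonnegative quadratic term, and chain the two bounds through the hypothesis; your additions (deriving positive semidefiniteness of $K_{\mathrm{SG}}$ from $\alpha_\ell\ge 0$, and making $\eta>0$, $r\neq 0$ explicit) fill in details the paper leaves implicit.
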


\begin{proof}
For any positive semidefinite matrix \(A\), the linearized step \(r^+=r-\eta Ar\) changes the quadratic loss by
\[
    \frac12\|r\|^2-\frac12\|r^+\|^2
    =
    \eta r^\top A r
    -
    \frac{\eta^2}{2}r^\top A^2r.
\]
Applying this identity with \(A=K_{\mathrm{SG}}\), the StableGrad decrease is
\[
    \Delta_{\mathrm{SG}}
    =
    \eta r^\top K_{\mathrm{SG}}r
    -
    \frac{\eta^2}{2}r^\top K_{\mathrm{SG}}^2r.
\]
Since \(K_{\mathrm{SG}}\) is positive semidefinite, \(r^\top K_{\mathrm{SG}}^2r\leq \lambda_{\max}(K_{\mathrm{SG}})r^\top K_{\mathrm{SG}}r\). Therefore,
\[
    \Delta_{\mathrm{SG}}
    \geq
    \eta r^\top K_{\mathrm{SG}}r
    \left(
        1-\frac{\eta}{2}\lambda_{\max}(K_{\mathrm{SG}})
    \right).
\]
Using \(r^\top K_{\mathrm{SG}}r=\|r\|^2\rho_{\mathrm{SG}}\), this becomes
\[
    \Delta_{\mathrm{SG}}
    \geq
    \eta\|r\|^2\rho_{\mathrm{SG}}
    \left(
        1-\frac{\eta}{2}\lambda_{\max}(K_{\mathrm{SG}})
    \right).
\]

For the standard step, the corresponding decrease is
\[
    \Delta_{\mathrm{std}}
    =
    \eta r^\top Kr
    -
    \frac{\eta^2}{2}r^\top K^2r.
\]
The second term is nonnegative, so \(\Delta_{\mathrm{std}}\leq \eta r^\top Kr=\eta\|r\|^2\rho\). Hence, the condition
\[
    \rho_{\mathrm{SG}}
    \left(
        1-\frac{\eta}{2}\lambda_{\max}(K_{\mathrm{SG}})
    \right)
    >
    \rho
\]
implies \(\Delta_{\mathrm{SG}}>\Delta_{\mathrm{std}}\). The additional condition \(\eta\lambda_{\max}(K_{\mathrm{SG}})<2\) ensures that the stability factor is positive.
\end{proof}

\section{Connecting Theory and Practice}
\label{app:theory_practice}

This section connects the local effective-dynamics analysis with controlled empirical diagnostics. 
Appendix~\ref{app:effective_diagnostics} reports a diagnostic Burgers run in which the quantities appearing in Theorem~\ref{thm:local_decrease} are measured directly, including the stability factor, theorem margin, residual linearization error, and layer-wise gradient-scale imbalance. 
Appendix~\ref{app:lr_scheduler_control} then tests whether the observed improvement can be explained purely by a global learning-rate increase, using a spectral scheduler control matched to the StableGrad effective-kernel scale.

\subsection{Controlled Effective-Dynamics Diagnostics}
\label{app:effective_diagnostics}

The main experiments in Section~\ref{sec:evaluation} evaluate the final performance of the method.
Here we use a controlled diagnostic run to test whether the local quantities appearing in
Theorem~\ref{thm:local_decrease} behave in practice as predicted by the analysis in
Section~\ref{sec:effective_dynamics}.

The diagnostic problem is a three-dimensional viscous Burgers PINN on the periodic domain
\([-1,1]^3\), with \(t\in[0,1]\) and viscosity \(\nu=0.05\). The reference solution is an exact
periodic solution constructed through the Cole--Hopf transformation. The network is a tanh MLP
mapping \((t,x,y,z)\) to \((u,v,w)\), using a rescaled time coordinate together with periodic
Fourier features in the spatial variables. It has four hidden layers of width 96 and 30,723
trainable parameters. We compare AdamW against AdamW equipped with StableGrad, using the same
learning rate \(\eta=10^{-3}\), no weight decay, and a fixed diagnostic batch evaluated every
500 epochs.

All diagnostics are computed from an explicit weighted residual vector \(r(\theta)\) such that
\[
    \mathcal{L}(\theta)=\frac12\|r(\theta)\|^2.
\]
For this Burgers PINN, \(r(\theta)\) concatenates the weighted PDE, initial-condition, and
periodic boundary residuals. In the StableGrad run, each parameter block gradient \(g^\ell\) is
rescaled with
\[
    \alpha_\ell
    =
    \frac{\sigma_{\mathrm{ref}}}
    {\operatorname{std}(g^\ell)+\varepsilon},
    \qquad
    \sigma_{\mathrm{ref}}=\operatorname{std}(r(\theta)).
\]

Figure~\ref{fig:effective_diagnostics_loss} shows the train and validation losses for the two
optimizers. The practical effect is immediate: StableGrad decreases the loss much faster than
AdamW and reaches, by epoch 1000, a validation loss already lower than the final validation loss
obtained by AdamW at epoch 5000. The StableGrad trajectory is not strictly monotone, however.
After the rapid initial descent, the loss enters a short transient regime around epochs 3000 and
3500, where the validation loss slightly worsens before improving again. This behavior is exactly
the regime in which the sufficient condition in Theorem~\ref{thm:local_decrease} ceases to hold,
as shown below.

\begin{figure}[t]
    \centering
    \includegraphics[width=\linewidth,trim={0cm 0cm 0cm 0.8cm},clip]{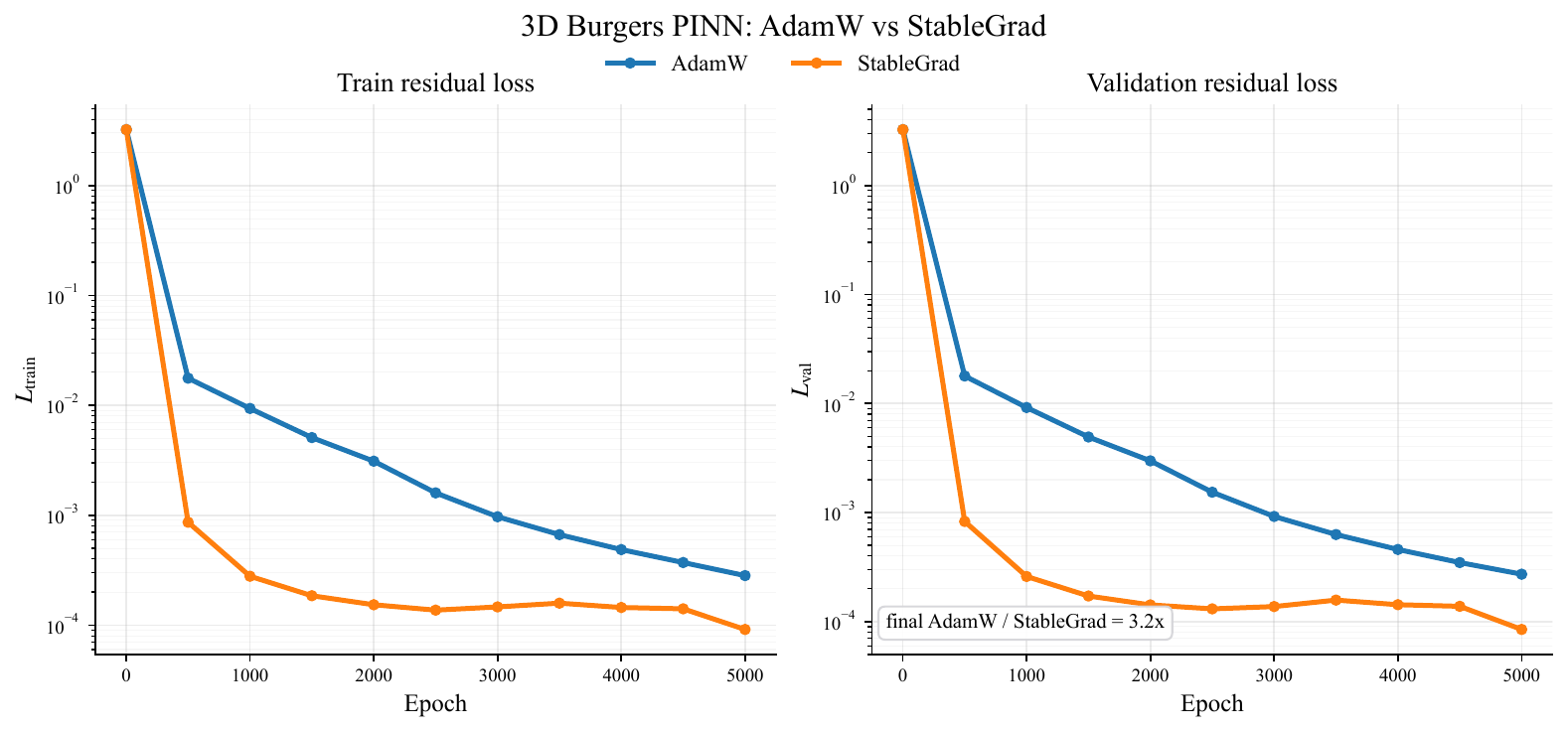}
    \caption{
    Train and validation losses for AdamW and AdamW+StableGrad on the controlled Burgers
    diagnostic run. The vertical axis is logarithmic. StableGrad reduces the loss much faster
    than AdamW and reaches, early in training, validation losses lower than the final loss
    attained by AdamW at the end of the run. The small non-monotone segment around epochs
    3000 and 3500 coincides with the checkpoints where the margin in
    Theorem~\ref{thm:local_decrease} becomes negative.
    }
    \label{fig:effective_diagnostics_loss}
\end{figure}

Theorem~\ref{thm:local_decrease} states that the StableGrad linearized step gives a larger local
decrease of \(\frac12\|r\|^2\) than the standard linearized gradient step when
\[
    \eta\lambda_{\max}(K_{\mathrm{SG}})<2
\]
and
\[
    M_{\mathrm{SG}}
    :=
    \rho_{\mathrm{SG}}
    \left(
    1-\frac{\eta}{2}\lambda_{\max}(K_{\mathrm{SG}})
    \right)-\rho
    >0.
\]
We therefore report the stability factor
\[
    s_{\mathrm{SG}}=\eta\lambda_{\max}(K_{\mathrm{SG}})
\]
and the theorem margin \(M_{\mathrm{SG}}\). Positive values of \(M_{\mathrm{SG}}\) indicate that
the sufficient condition of Theorem~\ref{thm:local_decrease} is satisfied.

To assess whether the linearized quantities are meaningful for the actual optimizer step, we also
measure the relative residual linearization error. Given the realized parameter update
\(\Delta\theta\), we compute
\[
    E_{\mathrm{lin}}
    =
    \frac{
    \|r(\theta+\Delta\theta)-r(\theta)-J\Delta\theta\|
    }{
    \|r(\theta+\Delta\theta)-r(\theta)\|+\varepsilon
    }.
\]
Small values of \(E_{\mathrm{lin}}\) indicate that the first-order residual model used in the
theory is accurate at the scale of the actual update.

\begin{table*}[t]
\centering
\caption{
Effective-dynamics diagnostics for the StableGrad run on the fixed diagnostic batch.
Here \(s_{\mathrm{SG}}=\eta\lambda_{\max}(K_{\mathrm{SG}})\), \(M_{\mathrm{SG}}\) is the margin
in Theorem~\ref{thm:local_decrease}, and \(R_{\mathrm{std}}\) is the ratio between the largest
and smallest layer-wise gradient standard deviations before and after StableGrad rescaling.
}
\label{tab:effective_diagnostics_burgers}
\begin{tabular*}{\textwidth}{@{\extracolsep{\fill}}c c c c c c c c@{}}
\toprule
Epoch
& Val. loss
& \(\rho\)
& \(\rho_{\mathrm{SG}}\)
& \(s_{\mathrm{SG}}\)
& \(M_{\mathrm{SG}}\)
& \(E_{\mathrm{lin}}\)
& \(R_{\mathrm{std}}\) raw \(\to\) scaled \\
\midrule
500  & \(8.28{\times}10^{-4}\) & 0.51  & 3.40  & 0.134 & 2.66   & \(2.63{\times}10^{-3}\) & 42.7 \(\to\) 1.00 \\
1000 & \(2.59{\times}10^{-4}\) & 2.15  & 5.89  & 0.097 & 3.45   & \(1.32{\times}10^{-3}\) & 136.0 \(\to\) 1.00 \\
2000 & \(1.42{\times}10^{-4}\) & 9.49  & 11.88 & 0.059 & 2.05   & \(1.46{\times}10^{-3}\) & 282.2 \(\to\) 1.00 \\
2500 & \(1.31{\times}10^{-4}\) & 13.03 & 13.76 & 0.050 & 0.378  & \(1.66{\times}10^{-3}\) & 316.2 \(\to\) 1.00 \\
3000 & \(1.37{\times}10^{-4}\) & 15.82 & 14.99 & 0.044 & -1.153 & \(1.77{\times}10^{-3}\) & 334.3 \(\to\) 1.00 \\
3500 & \(1.57{\times}10^{-4}\) & 13.87 & 13.43 & 0.042 & -0.722 & \(2.80{\times}10^{-3}\) & 300.1 \(\to\) 1.00 \\
4000 & \(1.43{\times}10^{-4}\) & 10.76 & 11.63 & 0.042 & 0.622  & \(4.18{\times}10^{-3}\) & 227.5 \(\to\) 1.00 \\
5000 & \(8.47{\times}10^{-5}\) & 14.19 & 14.50 & 0.034 & 0.0646 & \(2.88{\times}10^{-3}\) & 295.0 \(\to\) 1.00 \\
\bottomrule
\end{tabular*}
\end{table*}

Table~\ref{tab:effective_diagnostics_burgers} shows that the stability condition is comfortably
satisfied throughout the run once the initial transient has passed. The quantity
\(s_{\mathrm{SG}}=\eta\lambda_{\max}(K_{\mathrm{SG}})\) remains between \(3.4\times 10^{-2}\)
and \(1.34\times 10^{-1}\), far below the threshold value 2. Thus, the StableGrad effective
kernel operates well inside the stable regime covered by Theorem~\ref{thm:local_decrease}.

The theorem margin is positive for most of the reported checkpoints, including the early and final
phases of training. In those regimes, the sufficient condition of
Theorem~\ref{thm:local_decrease} holds: the StableGrad effective kernel acts more strongly on the
current residual while remaining stable. The two negative checkpoints, at epochs 3000 and 3500,
are particularly informative rather than problematic. They occur exactly when the rapid initial
descent has already saturated and the validation loss temporarily stops improving, as seen in
Figure~\ref{fig:effective_diagnostics_loss}. This is precisely what one should expect from a
local sufficient condition: when the margin becomes negative, the theorem no longer predicts an
improved local decrease, and empirically the loss indeed ceases to improve. The condition is then
recovered at later checkpoints, and the loss decreases again.

The residual linearization error remains small throughout training. After the first checkpoint,
\(E_{\mathrm{lin}}\) stays in the \(10^{-3}\) range and never exceeds \(4.18\times 10^{-3}\).
This indicates that the first-order residual model used to define \(K_{\mathrm{SG}}\),
\(\rho_{\mathrm{SG}}\), and \(\lambda_{\max}(K_{\mathrm{SG}})\) is accurate at the scale of the
actual optimizer update. The diagnostics are therefore not merely formal kernel quantities; they
provide a faithful local description of the optimization dynamics observed in the trained PINN.

Finally, the raw layer-wise gradient scales are highly imbalanced before StableGrad rescaling,
with \(R_{\mathrm{std}}\) ranging from \(42.7\) to \(334.3\) across the reported checkpoints.
StableGrad reduces this ratio to one by construction. This confirms that the method is carrying
out the intended correction: it leaves the forward residual unchanged, but equalizes the scale of
the layer-wise weight gradients before they are passed to the optimizer.

Overall, the controlled diagnostics show a clear agreement between theory and practice. StableGrad
operates in a stable regime, satisfies the sufficient condition in
Theorem~\ref{thm:local_decrease} during the phases where the loss decreases most effectively, and
temporarily violates it exactly when the empirical trajectory stops improving. The practical effect
is substantial: at epoch 5000, AdamW reaches training and validation losses of
\(2.82\times 10^{-4}\) and \(2.72\times 10^{-4}\), whereas AdamW+StableGrad reaches
\(9.14\times 10^{-5}\) and \(8.47\times 10^{-5}\). More importantly, StableGrad reaches a
validation loss below the final AdamW value already around epoch 1000, showing that the improved
effective dynamics predicted by the theory translate into a much faster reduction of the PINN
training objective in practice.

\subsection{Learning-Rate Schedule Control}
\label{app:lr_scheduler_control}

One possible interpretation of StableGrad is that it improves training mainly by inducing a larger effective step size. 
We therefore include a control experiment designed to test this explanation directly. 
Besides AdamW and AdamW+StableGrad, we train AdamW with a piecewise learning-rate multiplier chosen to mimic the spectral scale change induced by StableGrad. 
For each epoch interval, the multiplier is set from the observed ratio
\[
    m
    \approx
    \frac{
    \lambda_{\max}(K_{\mathrm{SG}})
    }{
    \lambda_{\max}(K)
    },
    \qquad
    K = JJ^\top,
    \qquad
    K_{\mathrm{SG}} = JP_{\mathrm{SG}}J^\top .
\]
Thus, if the advantage of StableGrad were only due to a larger effective learning rate, this boosted AdamW control should reproduce its behavior. 
We do not use the ratio at epoch 1, which is dominated by the initialization transient; the first interval instead uses the ratio measured at epoch 500.

\begin{table}[t]
\centering
\caption{
Piecewise learning-rate multipliers used for the AdamW scheduler control. 
Each multiplier is chosen from the observed ratio
\(\lambda_{\max}(K_{\mathrm{SG}})/\lambda_{\max}(K)\) on the corresponding diagnostic interval.
}
\label{tab:lr_scheduler_control}
\begin{tabular*}{\columnwidth}{@{\extracolsep{\fill}}l c c c c c}
\toprule
Epoch interval
& 1--500
& 501--1000
& 1001--1500
& 1501--2000
& 2001--2500 \\
LR multiplier
& 5.114545
& 4.355357
& 3.303698
& 2.603073
& 2.214707 \\
\midrule
Epoch interval
& 2501--3000
& 3001--3500
& 3501--4000
& 4001--4500
& 4501--5000 \\
LR multiplier
& 1.795869
& 1.487962
& 1.336904
& 1.139297
& 0.950362 \\
\bottomrule
\end{tabular*}
\end{table}

Figure~\ref{fig:lr_scheduler_control} compares the resulting train and validation losses. 
The scheduler control is a strong baseline: it improves clearly over standard AdamW, showing that part of the acceleration can indeed be attributed to increasing the global step scale. 
However, it does not reproduce StableGrad. 
StableGrad maintains lower residual losses throughout training: the validation loss of the scheduler control is \(6.59\times\), \(7.76\times\), \(4.38\times\), \(2.54\times\), and \(2.08\times\) higher than StableGrad at epochs 500, 1000, 2000, 3000, and 5000, respectively. 
At the final checkpoint, StableGrad reaches a validation residual loss of \(8.473\times 10^{-5}\), compared with \(1.762\times 10^{-4}\) for the scheduler control.

\begin{figure}[t]
    \centering
    \includegraphics[width=\linewidth,trim={0cm 0cm 0cm 0.8cm},clip]{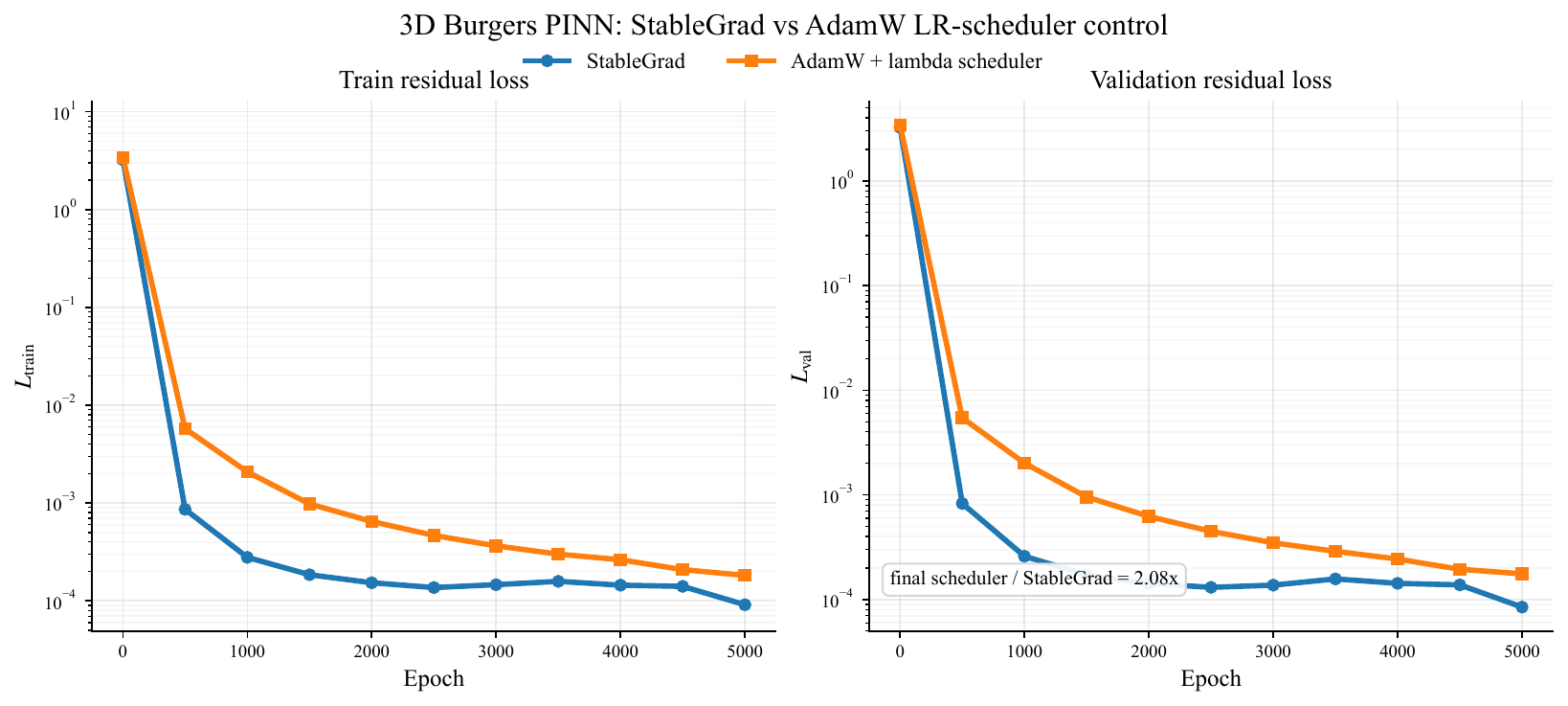}
    \caption{
    Train and validation residual losses for AdamW, AdamW with the spectral learning-rate scheduler, and AdamW+StableGrad. 
    The scheduler improves over AdamW, but does not reproduce the lower residual losses reached by StableGrad.
    }
    \label{fig:lr_scheduler_control}
\end{figure}

The difference is also visible in the update geometry. 
The scheduler changes the global step size, but it does not rebalance how the update is distributed across layers. 
At the final checkpoint, the valid relative-update ratio is \(398.7\) for AdamW with the scheduler and \(77.5\) for StableGrad, while the maximum update-energy concentration is \(0.782\) for the scheduler and \(0.350\) for StableGrad. 
Thus, the scheduler update remains much more concentrated in a small number of parameter blocks. 
StableGrad, in contrast, changes not only the amount of progress made per step, but also the layer-wise geometry of the update.

This control shows that StableGrad is not simply a learning-rate scheduler in disguise. 
Boosting the learning rate using spectral information explains part of the improvement over AdamW, but it does not recover the residual loss or the update distribution obtained by StableGrad. 
We note that the scheduler control obtains a lower field relative error in this particular run, so this experiment should not be read as a claim that StableGrad dominates every metric. 
Its purpose is narrower: it shows that the residual-loss gains and layer-wise update dynamics induced by StableGrad cannot be reduced to global learning-rate scaling alone.

\section{Additional Experimental Details}
\label{app:additional_experiments}

This section provides additional empirical evidence and reproducibility details for the experiments in the main paper. 
Appendix~\ref{app:std_wo_bn} analyzes the early activation-scale instability of BatchNorm-free EfficientNetV2-S, and Appendix~\ref{app:sign_gradient_comparison} compares StableGrad with a more aggressive sign-based gradient preprocessing baseline. 
Appendix~\ref{app:eval_details} specifies the evaluation protocols, training setups, benchmark definitions, validation procedures, and hardware requirements used for the reported CNN and PINN experiments.

\subsection{Evolution of Standard Deviation without BatchNorm}\label{app:std_wo_bn}

Figure~\ref{fig:std_evolution_without_bn} provides a magnified view of the initial training region shown in Figure~\ref{fig:cnn_training}, where the model trained without BatchNorm stops almost immediately after the beginning of training. To better understand this failure, the figure shows the evolution of the activation standard deviation across the convolutional layers of EfficientNetV2-S after removing all BatchNorm layers.

The activation standard deviations are plotted in logarithmic scale, and each curve is vertically shifted for readability. Consequently, sharp vertical movements in the plot correspond to large multiplicative changes in activation scale. This makes the sudden spikes and drops especially relevant, since they indicate abrupt changes in the numerical range of the activations.

Without BatchNorm, there is no normalization mechanism to recalibrate the scale of the intermediate activations. As training progresses, the scale of the weights can grow, which increases the magnitude of the activations produced by each layer. These activations are then passed to the following layers, so the effect can accumulate throughout the network. As a result, deeper layers tend to exhibit stronger fluctuations, since they are affected by the scale changes introduced by all preceding layers.

The figure shows that this instability appears very early in training. After fewer than 500 batches, the activation scale becomes unstable enough to produce a numerical overflow. This overflow propagates as a NaN value, after which the gradients and network weights also become NaNs. At that point, training fails and the curve corresponding to the model without BatchNorm in Figure~\ref{fig:cnn_training} stops.

\begin{figure}[ht]
    \centering
    \includegraphics[width=1\linewidth,trim={0cm 0cm 0cm 0.7cm},clip]{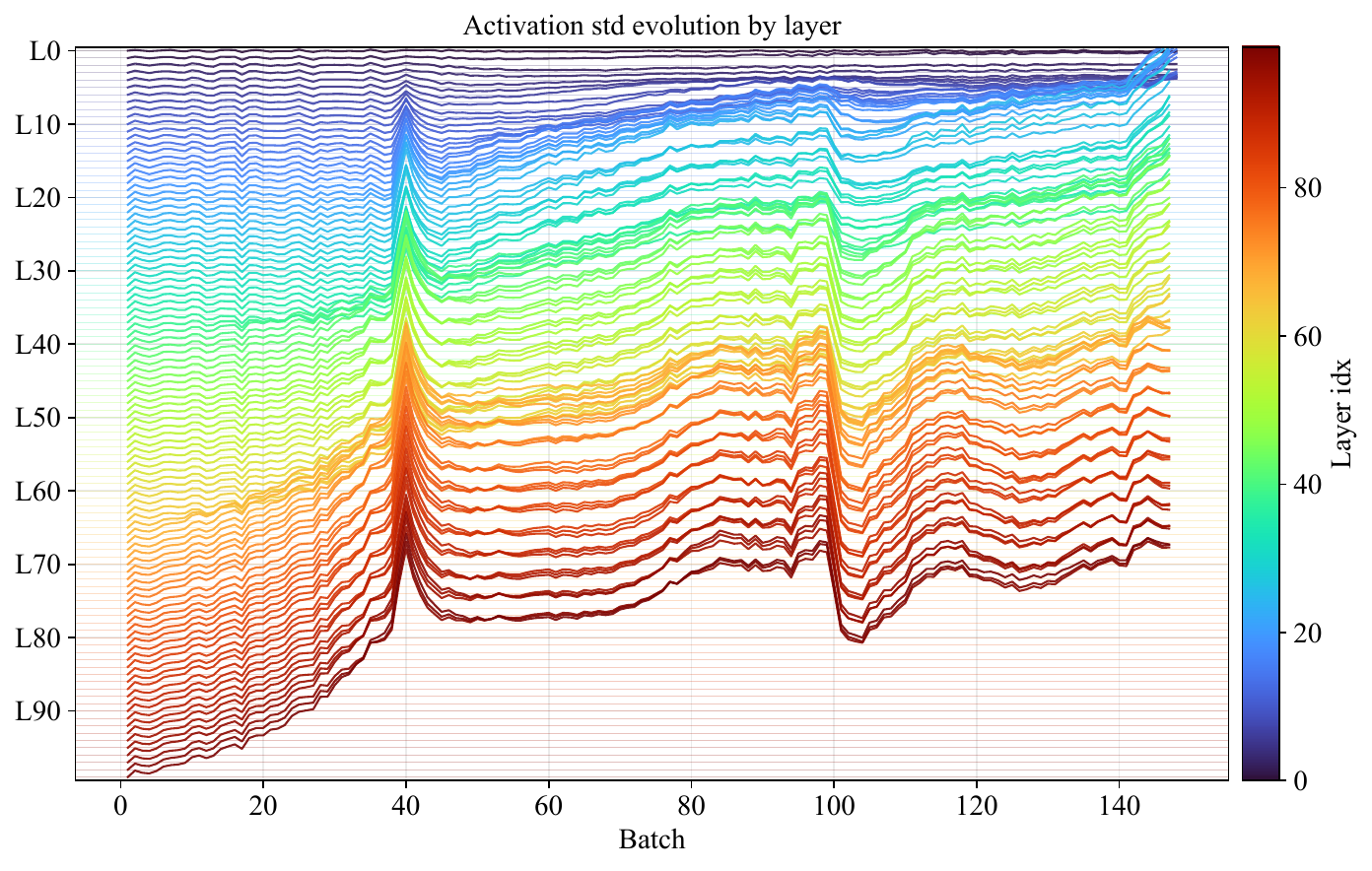}
    \caption{Magnified view of the activation-scale instability observed at the beginning of training for EfficientNetV2-S without BatchNorm. The activation standard deviation is shown in logarithmic scale across convolutional layers. Each curve corresponds to one Conv2D layer, vertically offset for readability and colored by layer index.}
    \label{fig:std_evolution_without_bn}
\end{figure}

\subsection{Comparison with Sign-Based Gradient Preprocessing}
\label{app:sign_gradient_comparison}
StableGrad modifies the gradients before the optimizer step, but it preserves their internal magnitude structure up to a layer-wise rescaling. 
A natural alternative is a more aggressive pre-optimizer transformation: replacing each gradient block \(g^\ell\) by its element-wise sign,
\[
    g^\ell \leftarrow \operatorname{sign}(g^\ell).
\]
This transformation can be viewed as an extreme form of scale homogenization, since it removes gradient-magnitude variation altogether. 
However, it also discards the relative magnitude information inside each layer, which may be important for the optimizer.

We tested this sign-based preprocessing in the same BatchNorm-free EfficientNet setting used in Section~\ref{subsec:CNNs}. 
Under the same training protocol, the sign-gradient variant failed almost immediately: training collapsed by epoch 2 and validation accuracy remained near \(1\%\), i.e. chance level on CIFAR-100. 
In contrast, StableGrad trains the same BatchNorm-free architecture stably without requiring architectural normalization or additional max-norm constraints.

This comparison shows that the benefit of StableGrad is not merely due to making all gradient scales similar. 
A naive sign transformation also removes scale variation, but it destroys too much gradient information to train the model. 
StableGrad instead equalizes layer-wise gradient scale while preserving the within-layer gradient structure passed to the optimizer.

\subsection{Evaluation Details}\label{app:eval_details}

\paragraph{CNN classification experiments.}
For the CNN classification baselines, we evaluate ResNet-50 on ImageNet-1k and EfficientNetV2-S on CIFAR-100. In both cases, images are processed at an input resolution of \(224\times224\), using random resized crops, random horizontal flips, and normalization. Training uses a batch size of \(128\). ResNet-50 is trained on ImageNet-1k for \(70\) epochs using SGD with momentum \(0.9\), learning rate \(0.1\), and weight decay \(2\times10^{-5}\). EfficientNetV2-S is trained on CIFAR-100 for \(100\) epochs using AdamW with learning rate \(10^{-3}\) and weight decay \(10^{-3}\). Both models are trained with a cosine annealing learning-rate scheduler. For each experiment, the checkpoint with the best validation top-1 accuracy is used for evaluation, and we report top-1 accuracy on the corresponding validation set.

\paragraph{Burgers equation.}
For the Burgers benchmark, we consider the one-dimensional viscous Burgers equation \(u_t + u u_x = \nu u_{xx}\), on \((x,t)\in[-1,1]\times[0,1]\), with viscosity \(\nu=10^{-4}\), initial condition \(u(x,0)=-\sin(\pi x)\), and homogeneous Dirichlet boundary conditions \(u(-1,t)=u(1,t)=0\). The PINN is a fully connected network with width \(64\), tanh activations, and the depth specified in each experiment. The model is trained with AdamW using initial learning rate \(10^{-3}\), zero weight decay, and a cosine annealing learning-rate scheduler. Baseline runs use AdamW for \(50{,}000\) optimization steps. StableGrad runs use AdamW with StableGrad for the first \(25{,}000\) steps, followed by \(25{,}000\) additional fine-tuning steps with standard AdamW. The loss is a weighted sum of the PDE residual, initial-condition, and boundary-condition losses, with weights \(1\), \(10\), and \(10\), respectively. During training, each stochastic batch contains \(100{,}000\) PDE collocation points, \(2{,}048\) initial-condition points, and \(2{,}048\) boundary-condition points. Evaluation is performed on independent validation samples and using the relative \(L^2\) error with respect to a high-resolution numerical reference solution. The reference solution is generated with a method-of-lines solver on a uniform grid of \(4096\) spatial points and \(401\) time snapshots.

\paragraph{Poisson equation.}
For the Poisson benchmark, we use fully connected PINNs with width \(64\), tanh activations, and the depth specified in each experiment. The model is trained with AdamW using initial learning rate \(10^{-3}\), zero weight decay, and a cosine annealing learning-rate scheduler. Baseline runs use AdamW for \(50{,}000\) optimization steps, while StableGrad runs use AdamW with StableGrad for the first \(25{,}000\) steps and standard AdamW for the remaining \(25{,}000\) fine-tuning steps. The loss consists of an interior PDE residual term and a softly enforced Dirichlet boundary-condition term, weighted by \(\lambda_{\mathrm{PDE}}=1\) and \(\lambda_{\mathrm{BC}}=100\), respectively. During training, each stochastic batch contains \(16{,}384\) interior residual points and \(4{,}096\) boundary points. For validation, the solution is evaluated on a \(256\times256\) grid, and the PDE and boundary losses are computed using \(65{,}536\) interior residual points and \(16{,}384\) boundary points.

\paragraph{Helmholtz equation.}
For the Helmholtz benchmark, we consider the three-dimensional problem with wave number \(k=10\pi\) and exact solution
\[
u(x,y,z)=\sin(m\pi x)\sin(m\pi y)\sin(m\pi z),
\]
with mode \(m=10\). The Helmholtz PINN uses Fourier feature inputs and SiLU activations. The input coordinates \((x,y,z)\) are augmented with sinusoidal features \(\sin(\pi f x_i)\) and \(\cos(\pi f x_i)\), for each coordinate \(x_i\in\{x,y,z\}\) and frequencies \(f=1,\ldots,12\), while also retaining the original coordinates. The resulting features are passed to a fully connected network with width \(64\) and the depth specified in each experiment. The model is trained with AdamW using learning rate \(10^{-4}\), zero weight decay, and a warm-up period of \(1{,}000\) steps. Baseline runs use AdamW for \(50{,}000\) optimization steps, while StableGrad runs use AdamW with StableGrad for the first \(25{,}000\) steps and standard AdamW for the remaining \(25{,}000\) fine-tuning steps. The PDE residual is normalized by \(k^2\), which stabilizes optimization at high wave numbers. The loss consists of the normalized PDE residual and a softly enforced Dirichlet boundary-condition term, weighted by \(\lambda_{\mathrm{PDE}}=1\) and \(\lambda_{\mathrm{BC}}=100\), respectively. During training, each stochastic batch contains \(32{,}768\) interior residual points and \(8{,}192\) boundary points. For validation, the PDE and boundary losses are computed using \(65{,}536\) interior residual points and \(16{,}384\) boundary points, and the relative \(L^2\) error is computed on a uniform grid of size \(128^3\), evaluated in chunks of \(65{,}536\) points.

Unless otherwise stated, all PINN constraints are imposed softly through penalty terms in the training objective. The numerical values reported in the tables are computed on validation data, whereas the optimization curves shown in the plots use the corresponding training losses.

\paragraph{Code availability.}
The code used to run all experiments reported in this work, together with a reusable implementation of StableGrad for other training pipelines, is available at
\href{https://github.com/anonymized/stablegrad}{\texttt{github.com/anonymized/stablegrad}}.

\paragraph{Hardware requirements.}
The experiments require a CUDA-compatible NVIDIA GPU supported by recent PyTorch releases. To provide conservative runtime estimates, each individual PINN training run can be reproduced on a single NVIDIA H100, or an equivalent accelerator, in under one hour. For the CNN experiments, each EfficientNetV2-S/CIFAR-100 training run can be reproduced in under four hours, while each full ResNet-50/ImageNet-1k training run can be reproduced in under 24 hours. These times are conservative upper bounds rather than the minimum required runtime. Full-length CNN training is only necessary to reproduce the final reported accuracies; shorter runs are sufficient to verify that the implementation trains correctly without BatchNorm.



\end{document}